\def\Gsv{{\wt G^{\mathrm{S}, \wh v}}}
\def\Gsaq{{\wt G^{\mathrm{SA}, \wh q}}}
\def\Gtraj{{\wt G^{\mathrm{Traj}, \wh q}}}
\def\AA{\mathcal{A}}
\def\NN{\mathcal{N}}
\def\SS{\mathcal{S}}
\def\Rbb{\mathbb{R}}
\def\R{\Rbb}
\def\t{\top}
\def\*{\star}
\renewcommand{\eg}{e.g.\xspace}
\renewcommand{\ie}{i.e.\xspace}
\newcommand\blfootnote[1]{%
	\begingroup
	\renewcommand\thefootnote{}\footnote{#1}%
	\addtocounter{footnote}{-1}%
	\endgroup
}
\title{Trajectory-wise Control Variates for Variance Reduction in Policy Gradient Methods}
\author{
  Ching-An Cheng\textsuperscript{*}\\
  Georgia Tech \\
  \texttt{cacheng@gatech.edu}
  \And 
  Xinyan Yan\textsuperscript{*} \\
  Georgia Tech \\
  \texttt{xinyan.yan@cc.gatech.edu}
  \And 
  Byron Boots 
  \\
  Georgia Tech\\
  \texttt{bboots@cc.gatech.edu}
}
\begin{document}
\maketitle

\begin{abstract}
	Policy gradient methods have demonstrated success in reinforcement learning tasks that have high-dimensional continuous state and action spaces. However, policy gradient methods are also notoriously sample inefficient. This can be attributed, at least in part, to the high variance in estimating the gradient of the task objective with Monte Carlo methods. 
	Previous research has endeavored to contend with this problem by studying control variates (CVs) that can reduce the variance of estimates without introducing bias, including the early use of baselines, state dependent CVs, and the more recent state-action dependent CVs.
	In this work, we analyze the properties and drawbacks of previous CV techniques and, surprisingly, we find that these works have overlooked an important fact that Monte Carlo gradient estimates are generated by \emph{trajectories} of states and actions. We show that ignoring the correlation across the trajectories can result in suboptimal variance reduction, and we propose a simple fix: a class of \emph{trajectory-wise} CVs, that can further drive down the variance.
	We show that constructing trajectory-wise CVs can be done recursively and requires only learning state-action value functions like the previous CVs for policy gradient.
	We further prove that the proposed trajectory-wise CVs are optimal for variance reduction 
	under reasonable assumptions.
\end{abstract}
  {\blfootnote{\textsuperscript{*}
		Equal contribution.}}
\keywords{Reinforcement Learning, Policy Gradient, Control Variate} 


\section{Introduction}

Policy gradient methods~\cite{williams1992simple,sutton2000policy,kakade2002natural,peters2008natural,schulman2015trust,cheng2018predictor} are a popular class of model-free reinforcement learning (RL) algorithms. They have many advantages, including straightforward update rules and well-established convergence guarantees~\cite{sutton2000policy,konda2000actor,cheng2018fast,yang2019policy}. 
However, basic policy gradient methods, like REINFORCE~\cite{williams1992simple}, are also notorious for their sample inefficiency. 
This can be attributed, at least in part, to the high variance in Monte Carlo gradient estimates, which stems from both policy stochasticity necessary for exploration as well as stochastic environmental dynamics. The high variance is further exacerbated as the RL horizon becomes longer and higher dimensional.
If the variance of gradient estimates can be reduced, then the learning speed of policy gradient methods can be accelerated \cite{ghadimi2016mini,cheng2018predictor}.

The reduction of variance in policy gradients thus is an important research topic in RL, which has been studied since early work on the development of policy gradient methods.
For example, function approximators (critics) have been adopted to (partially) replace the Monte Carlo estimates of accumulated costs to  reduce variance but at the expense of introducing bias in the search direction ~\cite{sutton2000policy,kimura2000analysis,thomas2014bias,silver2014deterministic,schulman2015high,sun2018truncated}. 
This bias-variance tradeoff can work well in practice, but can also potentially cause divergent behaviors and requires careful tuning
~\cite{schulman2015high,efroni2018beyond,yang2019policy}.

Another line of research uses the control variate (CV) method from statistics, designed for reducing variance in Monte Carlo methods without introducing bias~\cite{sutton2000policy,ng1999policy,greensmith2004variance,jie2010connection,gu2016q,liu2017action,grathwohl2017backpropagation,tucker2018mirage,pankov2018reward}. For policy gradient algorithms, specialized CV methods have been proposed in order to take advantage of structure inherent in RL problems. 
The CV method works by defining a certain correlation function (called the control variate) that approximates the Monte Carlo samples and yet yields a closed-form (or low-variance approximation of) the expectation of interest.
%
For policy gradient especially, the state dependent CVs (also known as baselines or reward reshaping~\cite{ng1999policy,jie2010connection})  have been thoroughly investigated ~\cite{greensmith2004variance}.
Common state dependent CVs are constructed as approximators of the policy's value function, which admits update rules based on standard policy evaluation techniques. 
Overall, state dependent CVs are simple to implement and have been found to be quite effective. 
However, the resulting policy gradients can still posses detrimentally  high variance, especially in  problems that has a long horizon.
%
This  has motivated the recent development of state-action dependent CVs~\cite{gu2016q,liu2017action,grathwohl2017backpropagation,tucker2018mirage,wu2018variance}. 
By using more elaborate CVs, these techniques can further reduce the variance due to randomness in the \emph{actions} in gradient estimates that the previous state-only CVs fails to manage. 

Considering the decades-long development of CV methods, one might wonder if there is a need for new policy gradient CV techniques. 
In this paper, we argue that the past development of CVs for policy gradients has overlooked an important fact that the Monte Carlo gradient estimates are generated by rolling out a policy and collecting statistics along a \emph{trajectory} of states and actions. Instead the focus has been on sampling \emph{pairs} of states and actions, ignoring the correlation between states and actions \emph{across} time steps.
Recently \citet{tucker2018mirage} empirically analyzed the variance of instantaneous state-action pairs and compared this to the variance correlations across time steps in multiple simulated robot locomotion tasks. They found that the variance due to long-term trajectories is often larger than the variance due to instantaneous state-action pairs. 
This finding implies that there is potential room for improvement. 

In this paper, we theoretically analyze the properties of previous CVs, and show that indeed the variance due to long-term trajectories has non-negligible effects. Motivated by this observation, we propose a family of trajectory-wise CVs, called TrajCV, which augment existing CVs with extra terms to \emph{additionally} cancel this long-term variance. We show that TrajCV is particularly effective when the transition dynamics, despite unknown, is close to deterministic.
Like existing CVs, TrajCV requires only knowledge of state-action value function (\ie Q-function) approximates, and can be computed recursively. 
Furthermore, we prove that TrajCV is optimal for variance reduction under reasonable assumptions.
These theoretical insights are validated in simulation.

Upon finishing this work we discovered a recent technical report~\cite{pankov2018reward} that is motivated similarly and details exactly the equation~\eqref{eq:traj-wise difference estimator} that TrajCV uses. Their empirical results on simulated LQG tasks are encouraging too: TrajCV demonstrated superior performance compared with previous state and state-action dependent CVs.
By contrast, we derive TrajCV following a completely different route, which brings extra insight into the previous deficiency and suggests natural ways for improvement. In addition, we analyze other potential trajectory-wise CVs and prove the proposed idea is optimal.

\section{Problem Setup and Background} \label{sec:background}

We consider episodic policy optimization in a finite-horizon Markov Decision Process (MDP)~\cite{bellman1957markovian,bertsekas1995dynamic} with horizon $h$, state space $\Sc$, action space $\Ac$, instantaneous cost function $c: \Sc \times \Ac \to \R$, 
initial state distribution $p_1$, and 
dynamics $d$.\footnote{We use one-based indexing throughout the manuscript.} 
Given a parameterized \emph{stochastic} policy class $\Pi$
the goal is to search for a policy in $\Pi$
that achieves low accumulated costs averaged {over trajectories}
\begin{align} \label{eq:objective}
J(\pi) := \Exp   \br{ C_{1:h}},  \quad \text{where  } C_t := c(S_t, A_t), \quad S_1 \sim p_1, \quad A_t \sim \pi_{S_t}, \quad S_{t+1} \sim d_{S_t, A_t}
\end{align}
where
$d_{s,a}$ denotes the distribution of the next state after applying action $a\in\Ac$ at state $s\in\Sc$, and $\pi_s$ denotes  the distribution of action at state $s \in \Sc$.
Note that $S_t$ and $A_t$ are the sampled state and action at step $t$, and $_:$ denotes summation (\ie  $C_{1:h} = \sum_{t=1}^{h} C_t$).
For simplicity of writing, we embed the time information into the definition of state, \eg, $c(S_t, A_t)$ can represent non-stationary functions.
The randomness in \eqref{eq:objective} consists of 
the randomness in the start state, policy, and dynamics.
In this work, we focus on the case where the dynamics  $d$ and the start state distribution $p_1$ are unknown, but the instantaneous cost $c$ is known. 

For notation, we will use uppercases to denote random variables, such as $S_t$ and $A_t$, with the exception of $J$. We will be frequently manipulating conditional distributions.
 We adopt the subscript notation below to write conditional expectation and variance. For $\Exp_{X|Y}[f(X,Y)]$ of some function $f$, $X$ denotes the random variable over which the expectation is defined and $Y$ denotes the conditioned random variable. Furthermore, for $f(X_1, \dots, X_N, Y)$, we use $\Exp_{|Y}[f(X_1, \dots, X_N, Y)]$ as a shorthand to denote taking the expectation over all other random variables (\ie  $X_1, \dots, X_N$) conditioned on $Y$. This subscript notation also applies to variance, which is denoted as $\Var$.

\subsection{Policy Gradient Methods: Pros and Cons}

The goal of this paper is to improve the learning performance of policy gradient methods~\cite{williams1992simple,konda2000actor,sutton2000policy,kakade2002natural,peters2008natural,silver2014deterministic,schulman2015trust,cheng2018predictor}. These algorithms treat minimizing \eqref{eq:objective} as a first-order stochastic non-convex optimization problem, where noisy, unbiased gradient estimates of $J$ in \eqref{eq:objective} are used to inform policy search. The basic idea is to apply the likelihood-ratio method to derive the gradient of \eqref{eq:objective}. Let us define $ N_t  := \nabla \log \pi_{S_t}(A_t)$, where $\nabla$ is the derivative with respect to the policy parameters, and define $q^{\pi}$ as the Q-function of $\pi$; that is, $q^{\pi}(S_t, A_t) = \Exp[C_{t:h}]$ where the expectation is generated by taking $A_t$ at $S_t$ and then $\pi$ afterwards.
Define $G :=G_{1:h}$ and $G_t :=  N_t C_{t:h}$.
Then it follows
~\cite{williams1992simple}
\begin{align}  \label{eq:policy gradient}
\textstyle \nabla J(\pi) =\Exp[\sum_{t=1}^{h} N_t q^{\pi}(S_t, A_t)]  = \Exp\br{G},
\end{align}
where the second equality is due to  $q^{\pi}(S_t, A_t) = \Exp[C_{t:h}]$.
Equation \eqref{eq:policy gradient} is an expectation over trajectories generated by running $\pi$. 
Therefore we can treat the random vector $G$ as an unbiased estimate of $\nabla J(\pi)$, which can be computed by executing the policy $\pi$ starting from distribution $p_1$ and then recording the statistics $G_t$, for $t\in \{1,\dots, h\}$. 
This technique is known as the Monte Carlo estimate of the policy gradient, which samples i.i.d. trajectories from the trajectory distribution defined in \eqref{eq:objective} to approximate the expectation. 

The policy gradient methods (\eg REINFORCE~\cite{williams1992simple}) optimize policies based on gradient estimates constructed using the above idea. 
They have numerous advantages. 
For example, they have straightforward update rules 
and convergence guarantee, as they minimizes $J$ directly by updating parameters in descent directions in expectation~\cite{sutton2000policy,konda2000actor,cheng2018fast,yang2019policy}.
However, simply using the Monte Carlo
estimate $G$ in policy optimization (\ie the vanilla implementation of REINFORCE) can result in poor parameter updates due to excessive variance~\cite{silver2014deterministic,schulman2015high}.
Therefore, while ideally one can apply standard first-order optimization algorithms such as mirror descent~\cite{beck2003mirror} using $G$ to optimize the policy, this often is not viable in practice. They would in turn require a tremendous amount of trajectory rollouts in a single update step in order to attenuate the high variance, making learning sample inefficient.

The high variance of $G$ is due to the exploration difficulty in RL: in the worst-case, the variance of $G$ can grow exponentially in the problem's horizon $h$~\cite{kakade2003sample,vemula2019contrasting}, as it becomes harder for the policy to visit meaningful states and get useful update information. Intuitively we can then imagine that policy optimization progress can be extremely slow, when the gradient estimates are noisy. 
From an optimization perspective, variance is detrimental to the convergence rate in stochastic optimization. 
For example, the number of iterations for mirror descent to converge to an $\epsilon$-approximate stationary point is {$O ((\Tr \paren{\Var[G]}+1)/\epsilon^2)$}, increasing as the problem becomes more noisy~\cite{ghadimi2016mini}.
Therefore, if the variance of estimates of \eqref{eq:policy gradient} can be reduced, the policy gradient methods can be accelerated.

\subsection{Variance Reduction and Control Variate}

A powerful technique for reducing the variance in the Monte Carlo estimates is the CV method~\cite{ross1990course,mcbook}. 
Leveraging correlation between random estimates, the CV method has formed the backbone of many state-of-the-art stochastic optimization algorithms 
~\citep{schmidt2017minimizing,johnson2013accelerating,defazio2014saga}.
The use of CV methods has also proved to be critical to designing practical policy gradient methods for RL~\cite{greensmith2004variance}, because of the high-variance issue of $G$ discussed in the previous section.
%
Below, we review the basics of the CV method as well as previous techniques designed for reducing the variance of $G$.
Without loss of generality, we suppose only one trajectory is sampled from the MDP  to construct the estimate of \eqref{eq:policy gradient} and study the variance of different single-sample estimates. 
We remind that the variance can be always further reduced, when more i.i.d. trajectories are sampled.



\subsection{The Control Variate Method} \label{sec:simple CV}

Consider the problem of estimating the expectation $\Exp [X]$, where $X$ is a 
(possibly multivariate) random variable. 
The CV method~\cite{ross1990course,mcbook} is a technique for synthesizing unbiased estimates of $\Exp [X]$ that potentially have lower variance than the naive sample estimate $X$. 
It works as follows: 
assume that we have access to another random variable $Y$, 
called the \emph{CV}, 
whose expectation $\Exp [Y]$ is cheaper to compute than $\Exp [X]$. Then we can devise this new estimate by a linear combination: 
\begin{align} \label{eq:estimate}
X - \alpha^\top (Y - \Exp [Y]),
\end{align}
where $\alpha$ is a properly-shaped matrix.
Due to the linearity of expectation, the estimate in \eqref{eq:estimate} is unbiased, and its trace of variance (\ie the size of variance) can be lower-bounded as~\cite{wang2013variance}
\begin{align} \label{eq:cv introduction}
&\eqsp 
\Tr \paren{\Var[X - \alpha^\top(Y - \Exp [Y])]}
\ge 
\Tr \paren{\Var[X] - 
\Cov[X, Y] \Var[Y]^{-1} \Cov[Y, X]}
\end{align}
Suppose $Y$ is in the same dimension as $X$.
One can show that when $\alpha$ is optimally chosen as
$
\alpha^\star = \frac{1}{2}\Var[Y]^{-1} \paren{\Cov[X,Y] + \Cov[Y,X]}
$.
When data are too scarce to estimate $\alpha^\star$, a practical alternative is setting $\alpha$ as the identity matrix, which often works well when $Y$ is positively correlated with $X$. 
The resulting estimate $X - (Y - \Exp \br{Y})$ is known as the \emph{difference estimator}~\cite{mcbook} and has variance 
$
\Var \br{X - Y}
$, meaning that if $Y$ is close to $X$ then the variance becomes smaller.
In the following, we concentrate on the design of difference estimators; we note that designing a good $\alpha$ is an orthogonal research direction.



\subsection{Common Control Variates for Policy Gradient Methods} \label{sec:common cv}

The art to various CV methods lies in the design of the correlated random variable $Y$. The choice is often domain-dependent, based on how 
$X$ is generated.
When estimating the policy gradient in~\eqref{eq:policy gradient}, many structures (\eg the Markov property) can be leveraged to design CVs, as we shall discuss. 
%
%
Following previous works (\eg \cite{greensmith2004variance,tucker2018mirage}) here we focus on the policy gradient component $G_t$ of $G$ given in \eqref{eq:policy gradient} for simplicity of exposition.\footnote{
	Without any assumption of the MDP, the variance of $G$ can be bounded by the variance of $G_t$ (\cref{app:pg var bound}). Tighter bounds can be derived when assumptions on the MDP is made, \eg, faster mixing rate~\cite{greensmith2004variance}.
} 
The most commonly used CVs for policy gradient~\cite{williams1992simple, ng1999policy,greensmith2004variance} are 
state-dependent functions $\widehat{v}:\SS\to\R$, which leads to the difference estimator 
\begin{align} \label{eq:state cv}
\Gsv_t
:=  G_t - \paren{N_t \wh{V}_t  - \Exp_{A_t|S_t}[N_t \wh{V}_t]} =
G_t - N_t \wh V_t,
\quad \text{ where }\wh{V}_t := \wh{v}(S_t),
\end{align}
and the expectation vanishes as $\Exp_{A_t|S_t}[N_t \wh{V}_t] =  \wh{V}_t \nabla \Exp_{A_t|S_t}[ 1] = 0$.\footnote{State dependent functions naturally include non-stationary constant baselines in our notation.}
Recently, \emph{state-action CVs}  $\wh{q}: \Sc \times \Ac \to \R$ have also been proposed~\cite{gu2016q, liu2017action,grathwohl2017backpropagation,tucker2018mirage,	wu2018variance,ciosek2018expected}, in an attempt to reduce more variance through CVs that better correlate with $G_t$.
The state-action CVs yields the difference estimator
\begin{align} \label{eq:state-action cv}
\Gsaq_t
:=  G_t - \paren{N_t \wh{Q}_t  - \Exp_{A_t|S_t}[N_t \wh{Q}_t]}, 
\quad \text{ where } \wh{Q}_t := \wh{q}(S_t, A_t).
\end{align}
Usually $\wh{v}$ and $\wh{q}$ are constructed as function approximators of the value function $v^\pi$ and the Q-function $q^\pi$ of the current policy $\pi$, respectively, and learned by policy evaluation, \eg, variants of TD($\lambda$) \cite{singh1996reinforcement}, during policy optimization.
Therefore, these methods can also be viewed as unbiased actor-critic approaches. 
In practice, 
it has been observed 
that these CVs indeed 
 accelerate policy optimization, especially in challenging simulated robot control tasks ~\cite{gu2016q, liu2017action,tucker2018mirage,	wu2018variance,ciosek2018expected}.

\section{Why We Need New Control Variates}
\label{sec:why we need new cvs}

Given the decades-long development of CVs for policy gradient reviewed above, one might wonder if there is a need for new policy gradient CV techniques. If so, what is the additional gain we can potentially have? 
To answer this question, let us first analyze the variance of policy gradient component  $G_t$ and how the CVs above reduce it.
By the law of total variance, $\Var[G_t]$ can be decomposed into \emph{three terms} 
\begin{align} \label{eq:G_t variance}
\Var_{S_t}  \Exp_{|S_t} \br{ N_t C_{t:h}}
+ \Exp_{S_t}  \Var_{A_t|S_t} \br{  N_t \Exp_{|S_t, A_t} \br { C_{t:h}}}  + \Exp_{S_t, A_t} \Var_{|S_t, A_t} \br {N_t C_{t:h} },
\end{align}%
where the first term is due to the randomness of policy and dynamics before getting to  $S_t$, the second term is due to policy randomness alone at step $t$, \ie selecting $A_t$, and the third term is due to again both the policy and the dynamics randomness in the future trajectories, \ie after $S_t$ and $A_t$.\footnote{The law of total variance: $\Var[f(X,Y)]= \Exp_{X}\Var_{Y|X}[f(X,Y)] + \Var_{X}\Exp_{Y|X}[f(X,Y)]$~\cite{chung2001course}.} 
Let us measure the size of these three terms by their trace and define
\begin{align}
\begin{split}
\V_{S_t} \coloneqq \Tr\paren{\Var_{S_t} \Exp_{|S_t} \br{ N_t C_{t:h}}}&, \quad
\V_{A_t|S_t} \coloneqq 	\Tr\paren{\Exp_{S_t} \Var_{A_t|S_t} \br{  N_t \Exp_{|S_t, A_t} \br { C_{t:h}} }},\\
\V_{|S_t, A_t} &\coloneqq \Tr\paren{\Exp_{S_t, A_t} \Var_{|S_t, A_t} \br {N_t C_{t:h}}}.
\end{split}
\end{align}
Hence,  $\Tr\paren{\Var[G_t]} = \V_{S_t} + \V_{A_t|S_t} + \V_{|S_t, A_t}$.
The following theorem shows the size of each term when the policy is Gaussian, which is commonly the case for problems with continuous actions.
\begin{theorem} \label{th:varaince size}
	Suppose the policy $\pi$ is Gaussian such that $\pi_{S_t}(A_t) = \NN(A_t|\mu_\theta(S_t), \sigma I )$, where $\mu_\theta$ is the mean function, and $\theta$ and $\sigma>0$ are learnable parameters.
	Assume the cost function $c$ is bounded and the Q-function $q^{\pi}(s,a)$ is analytic in $a$.
	Then for small enough $\sigma$, it holds that
	\begin{align*}
	\textstyle
	\V_{S_t} = O(h^2), 
	\quad \V_{A_t|S_t}= O\left(\frac{h^2}{\sigma^4}\right), 
	\quad 
	\V_{|S_t, A_t}= O\left(\frac{h^2}{\sigma^4}\right).
	\end{align*}
\end{theorem}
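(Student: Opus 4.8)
The plan is to handle the three variance components separately by making the Gaussian score explicit. Writing $A_t = \mu_\theta(S_t) + \sigma \epsilon_t$ with $\epsilon_t \sim \NN(0,I)$, the mean-direction part of $N_t = \nabla \log \pi_{S_t}(A_t)$ equals $\tfrac{1}{\sigma}(\nabla_\theta \mu_\theta(S_t))^\top \epsilon_t$ and the scale-direction part is proportional to $\tfrac{1}{\sigma}(\|\epsilon_t\|^2 - \Exp\|\epsilon_t\|^2)$. Thus $N_t$ is $\sigma^{-1}$ times a mean-zero random vector whose moments are finite and independent of $\sigma$; in particular $\Exp_{A_t|S_t}[N_t] = 0$ and the second moment $\Exp_{A_t|S_t}[\|N_t\|^2]$ scales as a negative power of $\sigma$. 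Since $c$ is bounded and $C_{t:h}$ sums at most $h$ costs, I would also record $q^\pi = \Exp_{|S_t,A_t}[C_{t:h}] = O(h)$ and $\Var_{|S_t,A_t}[C_{t:h}] = O(h^2)$, and, using that $q^\pi(s,\cdot)$ is analytic and uniformly bounded, that its action-derivatives $\nabla_a q^\pi$ and $\nabla_a^2 q^\pi$ are $O(h)$ on the relevant region (by Cauchy-type estimates).

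The crux is $\V_{S_t}$, which must be shown \emph{not} to blow up as $\sigma \to 0$. Here the inner quantity $\Exp_{|S_t}[N_t C_{t:h}] = \Exp_{A_t|S_t}[N_t\, q^\pi(S_t,A_t)]$ is an \emph{expectation} against the smooth $q^\pi$, so I would apply Gaussian integration by parts (Stein's identity). For the mean direction this turns $\Exp_{A_t|S_t}[N_t q^\pi]$ into $(\nabla_\theta\mu_\theta(S_t))^\top\Exp_{A_t|S_t}[\nabla_a q^\pi]$, and for the scale direction into a term proportional to $\sigma\,\Exp_{A_t|S_t}[\Delta_a q^\pi]$; in both cases the $\sigma^{-1}$ factor carried by the score is absorbed, leaving a quantity of size $O(h)$ that is free of $\sigma$ (under mild regularity of $\mu_\theta$). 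Because the trace of the variance of an $O(h)$-bounded random vector over $S_t$ is at most $\Exp_{S_t}\|\cdot\|^2 = O(h^2)$, this yields $\V_{S_t} = O(h^2)$.

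For the other two components no such cancellation is available, since the score now sits under a variance. For $\V_{|S_t,A_t}$, note $N_t$ is measurable with respect to $(S_t,A_t)$, so $\Var_{|S_t,A_t}[N_t C_{t:h}] = N_t N_t^\top \Var_{|S_t,A_t}[C_{t:h}]$ and its trace is $\|N_t\|^2 \Var_{|S_t,A_t}[C_{t:h}]$; taking $\Exp_{S_t,A_t}$ and combining $\Var_{|S_t,A_t}[C_{t:h}] = O(h^2)$ with the negative-power-of-$\sigma$ moments of $N_t$ gives the stated $O(h^2/\sigma^4)$ bound. For $\V_{A_t|S_t}$, I would bound the trace of the conditional variance by the second moment, $\Exp_{A_t|S_t}[\|N_t\|^2 (q^\pi)^2] \le O(h^2)\,\Exp_{A_t|S_t}[\|N_t\|^2]$, and again invoke the score-moment estimate to obtain the same rate. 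The contrast with $\V_{S_t}$ is exactly that here the inverse-$\sigma$ factors survive.

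The main obstacle lies entirely in $\V_{S_t}$: making the integration-by-parts rigorous for small $\sigma$ requires controlling the remainder of the expansion of $q^\pi(\mu_\theta(S_t)+\sigma\epsilon_t)$ uniformly in $S_t$, which is precisely where the analyticity hypothesis is essential, and it requires extracting uniform $O(h)$ bounds on the action-derivatives of $q^\pi$ from the analytic-plus-bounded assumption. Once these derivative bounds are in hand, the remaining work in all three terms reduces to routine Gaussian moment bookkeeping.
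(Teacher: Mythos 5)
Your proposal takes essentially the same route as the paper's proof: the decisive step for $\V_{S_t}$ is that the score's inverse powers of $\sigma$ are absorbed by Gaussian integration by parts---your Stein's-identity computation is equivalent to the paper's identity $\Exp_{|S_t}[N_t C_{t:h}] = \nabla \Exp_{A_t|S_t}[q^{\pi}(S_t,A_t)]$ combined with its polynomial-expansion argument for analytic $q^{\pi}$---while $\V_{A_t|S_t}$ and $\V_{|S_t,A_t}$ are handled in both cases by second-moment bookkeeping pairing the $O(1/\sigma^4)$ moments of $N_t$ with the $O(h)$ bounds on $C_{t:h}$ and $q^{\pi}$. The only cosmetic mismatch is your reparameterization $A_t = \mu_\theta(S_t) + \sigma\epsilon_t$ (covariance $\sigma^2 I$ rather than the paper's $\sigma I$), which shifts the exact nonpositive powers of $\sigma$ but stays within the claimed rates; the regularity concern you flag about extracting derivative bounds from analyticity is present to the same degree in the paper's own argument.
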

Here we focus on the effects due to the problem horizon $h$ and the policy variance $\sigma$.  \cref{th:varaince size} shows that, when 
the stochasticity in policy decreases
(\eg when it passes the initial exploration phase)
the terms $\V_{A_t|S_t}$ and $\V_{|S_t,  A_t}$ will dominate variance in policy gradients. An intuitive explanation to this effect is that, as the policy becomes more deterministic, it becomes harder to compute the derivative through zero-order feedback (\ie accumulated costs). In particular, one can expect that $\V_{|S_t, A_t}$ is likely to be larger than $\V_{A_t|S_t}$ when the variation of $C_{t:h}$ is larger than the variation of $q^{\pi}(S_t,A_t)= \Exp_{|S_t,A_t}[C_{t:h}]$.
After understanding the composition of  $\Var [G_t]$, let us analyze $\Var[\Gsaq_t]$ to see why using Q-function estimates as CVs (in Section~\ref{sec:common cv}) can reduce the variance.\footnote{Discussion on $\Var[\Gsv_t]$ is omitted in that $\Gsv_t$ is subsumed by $\Gsaq_t$.}
Akin to the derivation of \eqref{eq:G_t variance}, one can show that $\Var[\Gsaq_t]$ can be written as 
%
\begin{align} \label{eq:sa-cv variance}
\Var_{S_t}  \Exp_{|S_t} [ N_t C_{t:h}] +
\Exp_{S_t} \Var_{A_t|S_t} [  N_t (\Exp_{|S_t, A_t} [ C_{t:h}] - \wh{Q}_t)] +
\Exp_{S_t, A_t} \Var_{|S_t, A_t} [N_t C_{t:h} ].
\end{align}
Comparing \eqref{eq:G_t variance} and \eqref{eq:sa-cv variance}, we can see that the CVs in the literature have been focusing on reducing \emph{the second term} $\Var_{A_t|S_t}$. 
Apparently, from the decomposition \eqref{eq:sa-cv variance}, the optimal choice of the state-action CV $\wh{q}$ is the Q-function of the current policy $q^\pi$, because 
$q^\pi(S_t,A_t) := \Exp_{|S_t, A_t} [C_{t:h}]$,
which explains why $\wh{q}$ can be constructed by policy evaluation. When $\wh{q} = q^\pi$, the effect of  $\Var_{A_t|S_t}$ can be completely removed.
In practice, $\wh{q}$ is never perfect (let alone the state-dependent version); nonetheless, improvement in learning speed has been consistently reported.

However,  \cref{th:varaince size} suggests that $\Var_{|S_t, A_t}$ is a similar same size as $\Var_{A_t|S_t}$, implying that even when we completely remove the second term $\Var_{A_t|S_t}$, the variance of the gradient estimate can still be significant.
Indeed, recently~\citet{tucker2018mirage} empirically analyzed the three variance components in \eqref{eq:sa-cv variance} in LQG and simulated robot locomotion tasks. They found that the third term $\Var_{|S_t, A_t}$ is close to the second term $\Var_{A_t|S_t}$, and both of them are several orders of magnitude larger than the first term $\Var_{S_t}$.
Our \cref{th:varaince size} supports their finding and implies that there is a potential for improvement by reducing $\Var_{|S_t, A_t}$. We discuss exactly how to do this next.
%


%

\section{Trajectory-wise Control Variates}
\label{sec:trajectory}


We propose a new family of trajectory-wise CVs, called TrajCV, that improves upon existing state or state-action CV techniques by tackling \emph{additionally} $\Var_{|S_t, A_t}$, the variance due to randomness in trajectory after step $t$ 
(cf. \cref{sec:why we need new cvs}).
While this idea sounds intuitively pleasing, a technical challenge immediately arises. Recall in designing CVs, we need to know the expectation of the proposed CV function over the randomness that we wish to reduce (see \eqref{eq:estimate}). 
In this case, suppose we propose a CV $g(S_t, A_t, \dots,A_{h-1}, S_{h})$, we would need to know its conditional expectation $\Exp_{|S_t, A_t}[g(S_t, A_t, \dots, S_{h}, A_h)]$.
This need makes reducing $\Var _{|S_t, A_t}$ fundamentally different from reducing $\Var_{A_t|S_t}$, the latter of which has been the main focus in the literature: Because the  dynamics $d$ is unknown, we do not have access to the distribution of trajectories after step $t$ and therefore cannot compute $\Exp_{|S_t, A_t}$; by contrast, reducing $\Var_{A_t|S_t}$ only requires knowing the policy $\pi$.

%
At first glance this seems like an impossible quest. But we will show that by a clever divide-and-conquer trick, an unbiased CV can actually be devised to reduce the variance $\Var_{|S_t, A_t}$.
The main idea is to 
\textit{1)} decompose $\Var_{|S_t, A_t}$ through repeatedly invoking the law of total variance and then \textit{2)} attack the terms that are \emph{amenable} to reduction using CVs. 
As expected, the future variance cannot be completely reduced, because of the  unknown dynamics. But we should be able to reduce the randomness due to known distributions, namely, the future uses of policy $\pi$.
\begin{figure}[t!]
	\centering

	\begin{subfigure}{.49\textwidth}
		\centering		
		\includegraphics[trim={6cm 2cm 6cm 9cm}, clip, width=0.75\textwidth]{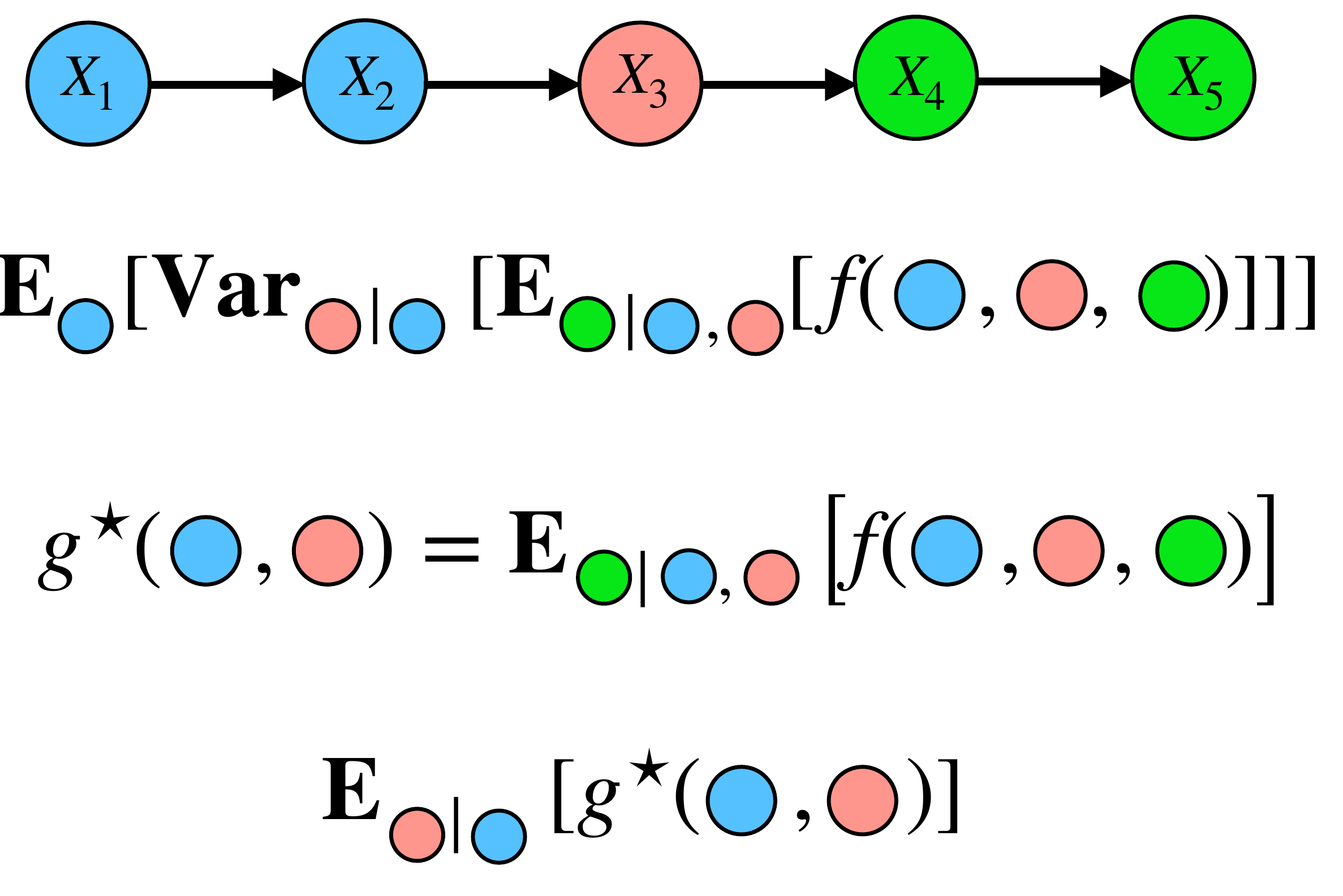}
		\caption{CV on a toy problem}
		\label{fig:cv}
	\end{subfigure}
	\begin{subfigure}{.49\textwidth}
		\centering		
		\includegraphics[trim={5.5cm 6cm 1cm 6cm}, clip, width=1.0\textwidth]{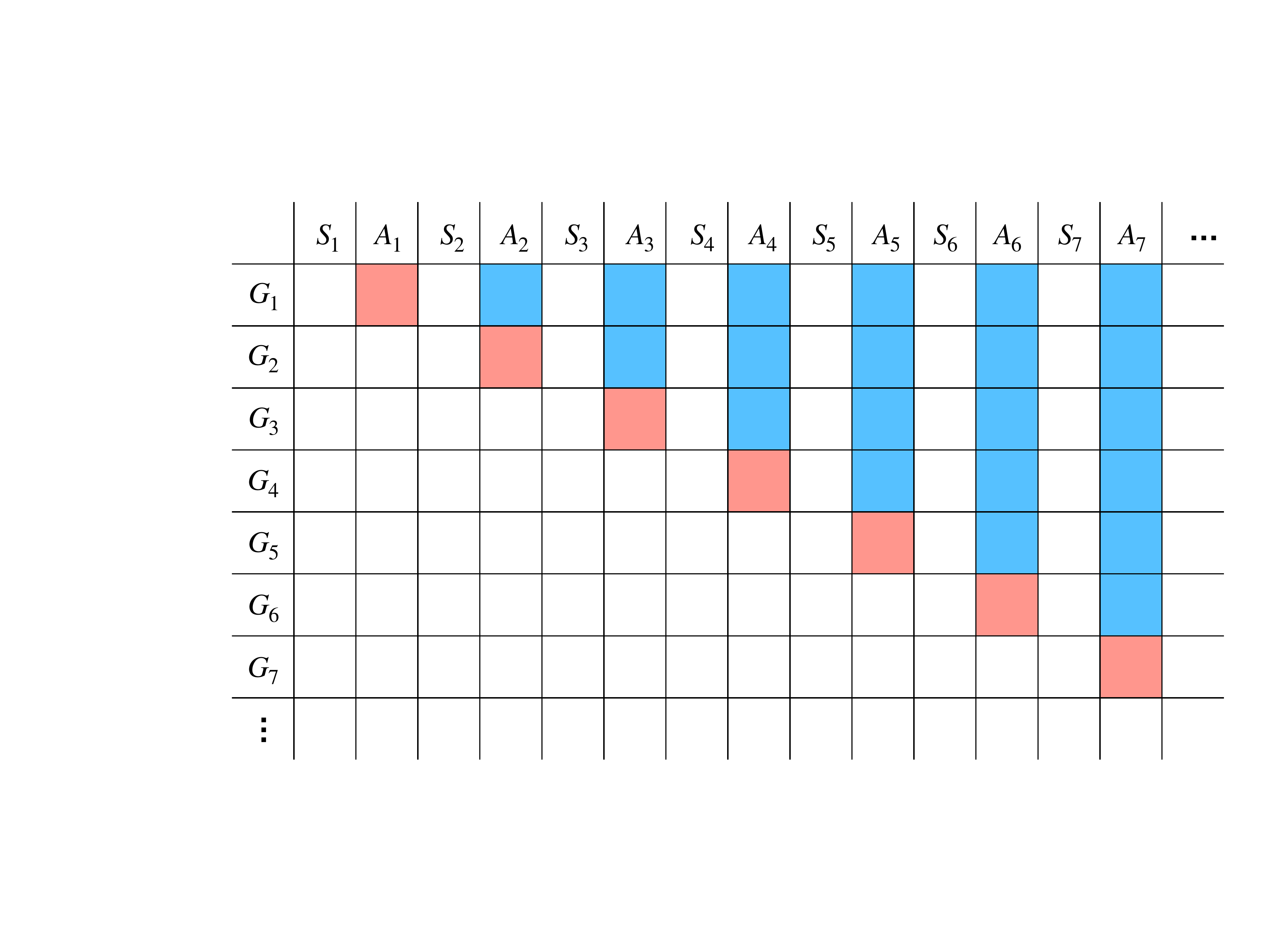}
		\caption{effects of state-action CV and TrajCV}
		\label{fig:comparison}
	\end{subfigure}
	\caption{
		(a) An illustration of the Divide-and-Conquer strategy on the toy problem in Section~\ref{sec:toy example} using color encoding. 
		First row: ordering and coloring of the random variables.
		Second row:  the variance to reduce, \ie the variance due to the randomness of variables in red circle.
		Third row: the corresponding optimal CV.
		Last row: the expectation required to compute the difference estimator for the optimal CV.
		(b) An illustration of the effect of state-action CV \eqref{eq:state-action cv} and TrajCV \eqref{eq:traj-wise difference estimator} on each policy gradient component $G_t$. Each row corresponds to each $G_t$, and each column corresponds to one term in the total variance of $G_t$. 
		For example, the $A_2$ column is associated with the term 
		$\Exp_{S_{1,2}, A_1} \Var_{A_2 |S_2} \Exp_{| S_2, A_2} \br{G_t}$.
		The state-action CVs influence the terms in red, whereas TrajCV \emph{additionally} affect the terms in blue.
	}
	\label{fig:some visualization}
\end{figure}

\subsection{A Divide-and-Conquer Strategy} \label{sec:toy example}

Before giving the details, let us first elucidate our idea using a toy problem. Consider estimating $\Exp \br{f(X_{1,5})}$, the expectation of a function $f$ of $5$ random variables, where the subscript $_{i,j}$ denotes the collection of random variables, \ie $X_{1,5} = \{X_1, \dots, X_5\}$.
We can apply the law of total variance repeatedly, in the order indicated by the subscript, and decompose the variance into
\begin{align}
\textstyle
\Var \br{f(X_{1,5})} = \sum_{k=1}^{5} \Exp_{X_{1,k-1}} \Var_{X_k|X_{1,k-1}} \Exp_{X_{k+1, n}|X_{1,k}}[f(X_{1,5})]
\end{align}
For example, suppose we wish to reduce $\Var_{X_3|X_{1,2}}$ we simply need to consider a CV in the form $g \paren{X_{1,3}}$, which does not depends on random variables with larger indices. With the difference estimator $f(X_{1,5}) - g \paren{X_{1,3}} + \Exp_{X_3|X_{1,2}} [g \paren{X_{1,3}}]$, the variance $\Var_{X_3|X_{1,2}}$ changes into
\begin{align*}
\Exp_{X_{1,2}} \Var_{X_3|X_{1,2}}[ \Exp_{X_{4,5}|X_{1,3}}[f(X_{1,5})] - g(X_{1, 3})]
\end{align*}
Apparently when $g$ is optimally chosen as
$
g^\star (X_{1,3}) :=  
\Exp_{X_{4, 5}|X_{1,3}}[f(X_{1,5})]
$, this term vanishes.

\textbf{Fact 1\quad }A key property of designing CVs by the recursive decomposition above is that 
the inclusion of the extra term, \eg $g \paren{X_{1, 3}} - \Exp_{X_3 | X_{1,2}} \br{g(X_{1, 3})}$, in the difference estimator only affects a single component $\Var_{X_3|X_{1,2}}$ in the total variance, \emph{without influencing the other terms}.
This separation property hence allows for a divide-and-conquer strategy: 
we can design CVs for each term separately and then combine them; the reduction on each term will add up and reduce the total variance.

\textbf{Fact 2\quad}There is still one missing piece before we can adopt the above idea to design CVs for estimating policy gradients: the ordering of random variables.
In the example above, we need to know $\Exp_{X_3|X_{1,2}} \br{g(X_{1,3})}$ to compute the difference estimator. Namely, it implicitly assumes the knowledge about 
$p(X_3 |X_{1,2})$, which may or may not be accessible. 
Suppose $p(X_3|X_{1,2})$ is not available but $p(X_3|X_{4,5})$ is. We can consider instead using
the law of total variance in a different order, \eg 
$
X_4 \rightarrow X_5 \rightarrow X_3 \rightarrow X_1 \rightarrow  X_2
$, 
and utilize the information $p(X_3|X_{4,5})$ to construct a difference estimator to reduce
$\Var_{X_3|X_{4,5}}$.
Therefore, the design of CVs hinges also on the information available. Recall that we only know about the policy but not the dynamics in RL.


\subsection{Design of TrajCV}
After fleshing out the idea in the example above, we are now ready to present 
TrajCVs
for policy gradient. 
Again let us focus on a policy gradient component $G_t$ for transparency. 
Recall that $G_t$ is a function of the random variables $S_{t,h}$ and $A_{t,h}$.
Given the information we know about the random variables
(\ie the policy) 
and the Markovian structure depicted in the Bayes network in Figure~\ref{fig: bayes net 1}, 
a natural ordering of them for applying law of total variance is
\begin{align} \label{eq:natural ordering}
S_t \rightarrow A_t \rightarrow S_{t+1} \rightarrow A_{t+1} \rightarrow \dots \rightarrow S_{h} \rightarrow A_{h}.
\end{align}
Thanks to the Markovian structure, 
 $\Var_{A_k|S_{1,k}, A_{1,k-1}}$ can be simplified into $\Var_{A_k|S_k}$. 
Thus, for reducing $\Var_{A_k|S_k}$, we may consider a CV, $\wh{g}^{(t)}(S_k, A_k)$, where the superscript $^{(t)}$ is a reminder of estimating $\Exp\br{G_t}$. 
With the insights from \cref{sec:toy example}, we see the optimal choice of $\wh{g}^{(t)}$ is
\begin{align*}
\wh{g}^{(t)\star}(S_{k}, A_{k}) 
&= \Exp_{S_{k+1, h}, A_{k+1,h} | S_{k}, A_{k} }  \br{N_t C_{t:h}}
= N_t \paren{C_{t:k-1} + q^\pi(S_k, A_k)},
\end{align*}
suggesting practically we can use
$\wh{g}^{(t)}(S_k, A_k) := N_t(C_{t:k-1} + \wh Q_k)$, 
where $\wh Q_k: = \wh q(S_k, A_k)$ and  $\wh q \approx q^\pi$
as was in \eqref{eq:state-action cv}.
In other words, we showed that finding the optimal CV for reducing $\Var_{A_k|S_k}$ can be reduced to learning the Q-function; 
this enables us   to take advantage of existing policy evaluation algorithms. 
Now we combine $\{\wh{g}^{(t)}(S_k, A_k)\}_{k=t}^h$ to build the CV for $G_t$. Because from \cref{sec:toy example} these terms do not interfere with each other, we can simply add them together, yielding 
TrajCV: 
$\sum_{k=t}^h \wh{g}^{(t)}(S_k, A_k)$. 
Equivalently, we have derived a difference estimator:
\begin{align} \label{eq:traj-wise difference estimator}
\Gtraj_t &:= \textstyle G_t - \sum_{t = 1}^h \paren{\wh{g}^{(t)}(S_k, A_k) - \Exp_{A_k|S_k} [\wh{g}^{(t)}(S_k, A_k)]} 
\nonumber \\
&= \textstyle G_t - \sum_{k=1}^h \paren{N_t \wh Q_k - \Exp_{A_k|S_k} [ N_t \wh Q_k]}
\end{align}
%
Comparing TrajCV in \eqref{eq:traj-wise difference estimator} and state-action CV in \eqref{eq:state cv}, we see that the state-action CV only contains the first term in the summation of TrajCV.\footnote{For brevity, we may use CV to mean the difference estimator of that CV when there is no confusion.}
The remaining terms, \ie $N_t (\wh Q_k - \Exp_{A_k|S_k} [\wh Q_k])$, for $t  < k \le h$, can be viewed as multiplying $N_t$ with  estimates of future advantage functions. 
Appealing to law of total variance,
$\Var [\Gtraj_t]$ can be decomposed into
\begin{footnotesize}
\begin{align} \label{eq:traj variance}
\hspace{-3mm}
&\Var_{S_t}  \Exp_{|S_t} \br{ N_t C_{t:h}}+
\Exp_{S_t}  \Var_{A_t|S_t} [  N_t (\Exp_{|S_t, A_t} \br { C_{t:h}} - \wh{Q}_t)] + \\[-1mm]
&\sum_{k=t}^{h}
\Exp_{S_{k}, A_{k}}\Var_{S_{k+1}|S_k, A_k} \br { N_t \Exp_{|S_{k+1}}[ C_{k:h}] }
+ 
\sum_{k=t}^h \Exp_{S_{k+1}}  \Var_{A_{k+1}|S_{k+1} } [ N_t ( \Exp_{|S_{k+1},A_{k+1}}[ C_{k:h}] - \wh{Q}_{k+1} )] \nonumber
\end{align}
\end{footnotesize}%
where we further decompose the effect of $\Var_{|S_t, A_t}$ in the second line
into the randomness in dynamics and actions, respectively. 
Therefore, suppose the underlying dynamics is deterministic (\ie $\Var_{S_{k+1}|S_k, A_k}$ vanishes), and $\wh{q} = q^{\pi}$,
then using TrajCV \eqref{eq:traj-wise difference estimator} 
would completely remove  $\Var_{A_t|S_t}$  and  $\Var_{|S_t, A_t}$, the latter of which previous CVs \eqref{eq:state cv} and \eqref{eq:state-action cv} cannot affect.
In \cref{fig:comparison},
we visualize the effect of TrajCV and state-action CV on each policy gradient component $G_t$, for $1 \le t \le h$:
state-action CVs only influence the diagonal terms, while TrajCVs are able to affect the full upper triangle terms.
Note that in implementation of TrajCV for $G_{1:h}$, we only need to compute quantities 
$\wh Q_t$,  $\Exp_{A_t|S_t} [ \wh Q_t]$ and 
$\nabla \Exp_{A_t|S_t}[\wh Q_t]$
along a trajectory (done in $O(h)$ time) and they can be used to compute $\{\Gtraj_t\}_{t=1}^h$ \eqref{eq:traj-wise difference estimator}.%
\footnote{
	We have 
	$
	\Exp_{A_t|S_t} [N_t \wh Q_t]= 
	\Exp_{A_t|S_t}  [\nabla \log \pi_{S_t} \wh Q_t] =
	\Exp_{A_t|S_t} \br{\frac{\nabla \pi_{S_t}(A_t)}{\pi_{S_t}(A_t)}\wh Q_t}
	= \nabla \Exp_{A_t|S_t} [\wh Q_t]
	$.
}
In addition, we remark that when $\wh{q}(s, a) = \wh{v}(s)$, TrajCV    
reduces to the state-dependent CVs.


\begin{figure}[t!]
	\centering
	\begin{subfigure}{.45\textwidth}
		\centering		
		\includegraphics[trim={4cm 10cm 10cm 6.5cm}, clip, width=0.9\textwidth]{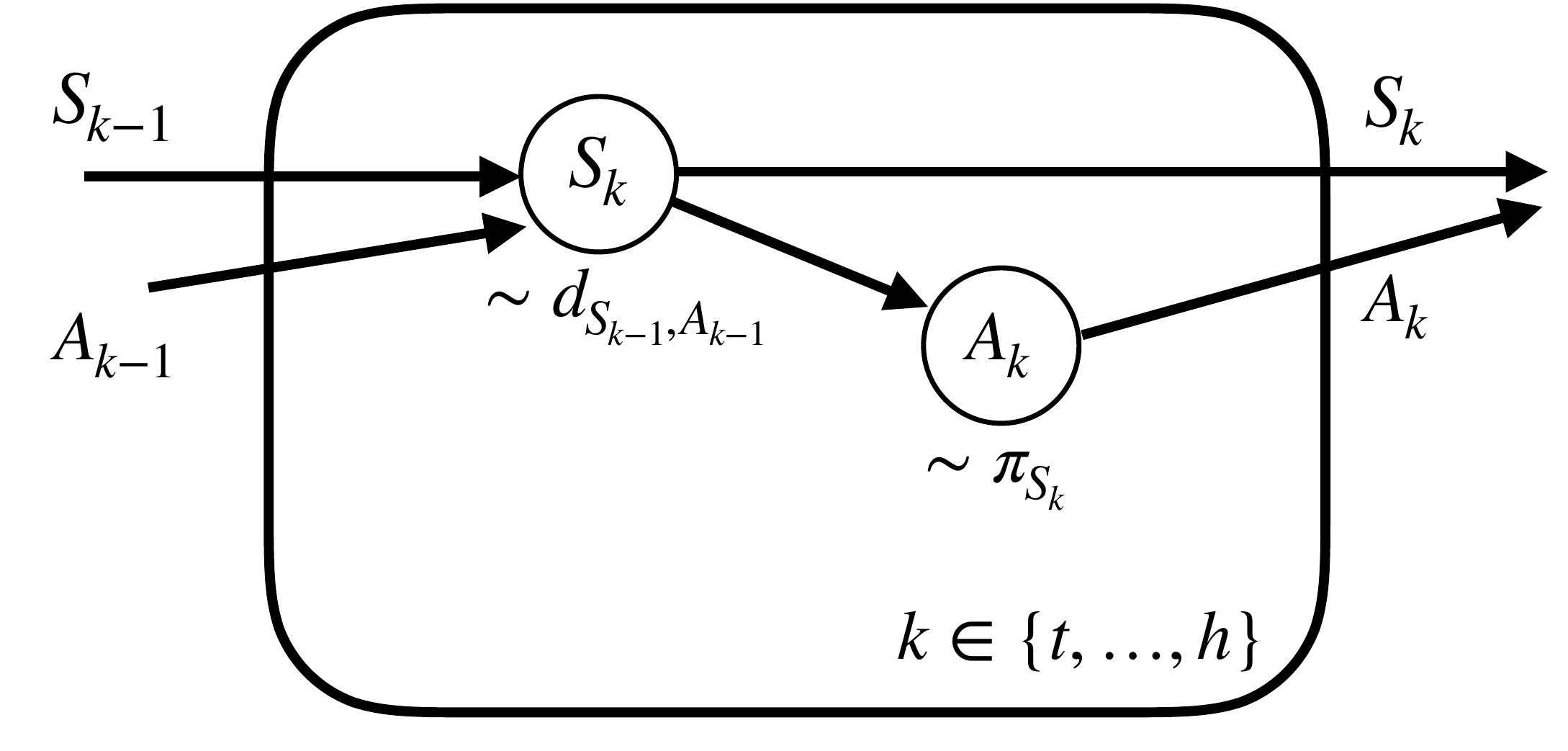}
		\caption{before policy reparameterization}
		\label{fig: bayes net 1}
	\end{subfigure}
	\begin{subfigure}{.45\textwidth} 
		\centering		
		\includegraphics[trim={4cm 10cm 10cm 6.5cm}, clip, width=0.9\textwidth]{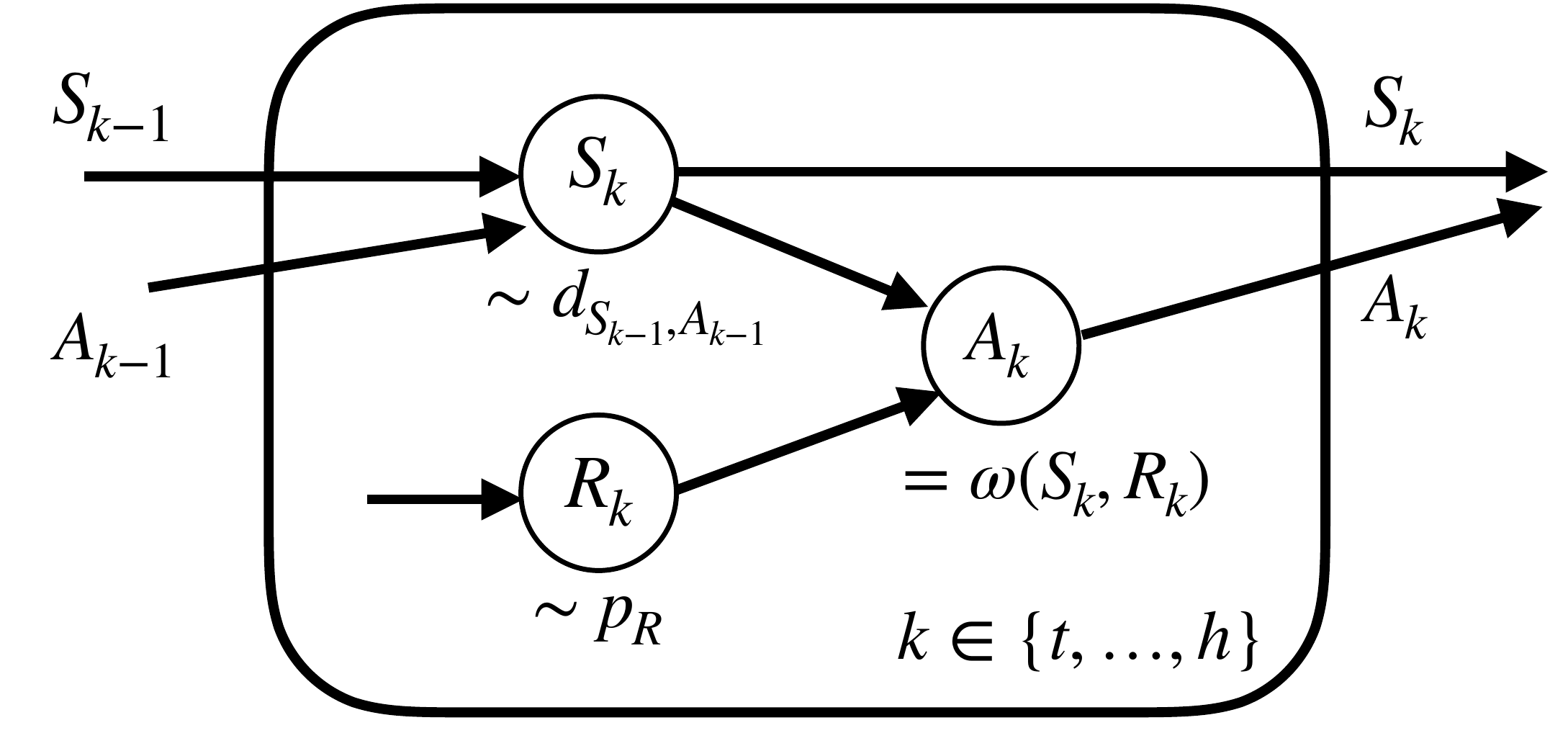}
		\caption{after policy reparameterization}
		\label{fig: bayes net 2}
	\end{subfigure}
	\caption{
		Bayes networks for the random variables in $G_t$ \eqref{eq:policy gradient}, before and after reparameterization. After policy is reparameterized, action $A_k$ is decided by state $S_k$ and action randomness $R_k$.
	}
\end{figure}

\subsection{The Natural Ordering in \eqref{eq:natural ordering} is Optimal}
\label{sec:optimal ordering}

Recall in \cref{sec:toy example} we mentioned that the admissible ordering of random variables used in invoking the law of total variance depends on the information available. Here we show that the chosen ordering \eqref{eq:natural ordering} is indeed, the best ordering to adopt, as we only know the policy, not the dynamics.

We compare \eqref{eq:natural ordering} against potential orderings constructed by reparameterizing the policy such that its randomness in action becomes \emph{independent of the input state}.
We suppose the policy $\pi \in \Pi$ can be reparameterized by a function $\omega: \Sc \times \Rc \to \Ac$ and a distribution $p_R$, so that for all $s \in \Sc$,  $\omega(s, R)$ 
and $\pi_s$ are equal. 
This is not a restricted assumption. 
For example, 
a Gaussian stochastic policy with mean $\mu_\theta(s)$ and covariance $\Sigma(s)$, widely used in continuous domains~\cite{williams1992simple,schulman2015trust,baxter2001infinite}, can be framed as 
$\omega(s, R) = \mu(s) +\Sigma^{1/2}(s) R $, where $R$ is multivariate standard normal. 
Even when $\Ac$ is discrete, a soft-max layer can be applied after $\omega$ to obtain a Gibbs (or Boltzmann) distribution~\cite{landau1958statistical,sutton2000policy}.

Reparameterization gives rise to the chance of designing a larger family of 
TrajCVs.
By extracting the randomness in action selection into the random variable $R_{t,h}$,  the policy gradient component $G_t$ becomes a function of $R_{t,h}$ and $S_{t,h}$, as depicted in 
the Bayes network in 
Figure \ref{fig: bayes net 2}.
When applying the law of total variance,  the ordering the random variables now can have many possibilities. 
For instance, in the extreme case,  the randomness of actions can be ordered before states (except $S_t$) as
\begin{align} \label{eq:action randomness first ordering}
S_t \rightarrow R_{t} \rightarrow \dots \rightarrow R_{h} \rightarrow  S_{t+1}\rightarrow \dots \rightarrow S_h,
\end{align}
leading to a CV
that's a function of $R_{t,h}$
bearing the optimal choice
$\Exp_{S_{t+1, h} |R_{t,h}, S_t} \br {N_t C_{t:h}}$.
Note that $\Exp_{S_{t+1, h} |R_{t,h}, S_t} \br {N_t C_{t:h}}$ is a function that inputs the observable action randomness $R_{t,h}$, not the randomness of the unknown dynamics. 
Therefore, it can be approximated, \eg, if we have a biased simulator of the dynamics.\footnote{We sample all the action randomness $R_{t,h}$ first, execute the policy $\pi$ in simulation with fixed randomness $R_{t,h}$, and then collect the statistics $N_t C_{t:h}$.}
One might ask, given all possible orderings of random variables, which ordering we should pick to design the CV.
Interestingly, to this question, the most natural one and the optimal one coincide. The proof is deferred to \cref{app:proof}.
\begin{theorem} \label{thm: optimal ordering}
Suppose that policy specified by $\omega$ and $p_R$ is known, but the dynamics $d$ is unknown.
Assume the optimal CV of a given ordering of random variables $S_{t,h}$ and $R_{t,h}$ can be obtained.
The the optimal ordering that minimizes the residue variance is the natural ordering~\eqref{eq:natural ordering} .
\end{theorem}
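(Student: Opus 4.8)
The plan is to recast the claim as a purely combinatorial optimization over the \emph{interleaving} of the action-noise variables $R_{t,h}$ and the state variables $S_{t,h}$ (working in the reparameterized Bayes net of \cref{fig: bayes net 2}). Two observations make this possible. First, the total variance $\Tr(\Var[G_t])$ is invariant to the ordering used in the law of total variance. Second, by \textbf{Fact 1} of \cref{sec:toy example} the per-variable terms do not interfere, so for any ordering the residue is exactly the total variance minus the sum of those terms that the optimal CV eliminates. Hence it suffices to show that the natural ordering \eqref{eq:natural ordering} \emph{maximizes the reducible variance}, i.e.\ the sum over the terms that can be driven to zero.

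First I would establish which terms are reducible, given that only the policy ($\omega$ and $p_R$) is known. A term attached to a variable $X_k$ can be cancelled by a difference estimator only if the conditional law $p(X_k \mid X_{<k})$ against which the estimator integrates is available (cf.\ \textbf{Fact 2}). This yields a dichotomy: a \emph{state} term is never reducible, since $p(S_k \mid X_{<k})$ invariably depends on the unknown dynamics $d$ (or on $p_1$ for $S_t$); an \emph{action-noise} term $R_m$ is reducible \emph{if and only if no downstream state} $S_i$ with $i>m$ precedes it. Indeed, $R_m$ is exogenous and independent of all its non-descendants, so $p(R_m \mid X_{<m}) = p_R$ whenever the predecessors contain only upstream states and other action noises; conversely, if a downstream $S_i$ precedes $R_m$, then evaluating $\Exp_{R_m \mid X_{<m}}$ requires inverting $p(S_i \mid \cdot, R_m)$, which is precisely the unknown dynamics. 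This reducibility argument, rather than a variance comparison, is what rules out orderings such as putting future states early.

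Next I reduce the problem to the relative order of each (action, state) pair via adjacent transpositions. Swapping two adjacent states, or two adjacent actions, changes neither the reducible set nor the reducible sum, because the law of total variance makes the combined term of an adjacent pair symmetric in the two variables and leaves their reducibility status intact. Thus the reducible variance depends only on, for every pair $(R_m, S_i)$, whether $R_m$ precedes $S_i$. I would then argue that any ordering can be morphed into \eqref{eq:natural ordering} by adjacent (action, state) swaps along which the reducible variance never decreases. There are two monotone moves. \emph{Moving a downstream state rightward past an action:} if $S_i$ with $i>m$ sits immediately before $R_m$, swapping them can only make $R_m$ reducible, and by symmetry of the paired term the reducible sum grows by the new nonnegative term of $R_m$. \emph{Moving an upstream state leftward past an action:} if $S_i$ with $i\le m$ sits immediately after $R_m$, then $R_m \perp S_i$ given the common predecessors, and a short application of the law of total variance shows the term of $R_m$ increases exactly by the nonnegative $R_m$–$S_i$ interaction variance (the expected conditional variance of the interaction component in an ANOVA decomposition of $\Exp_{|X_{\le}}[G_t]$, which vanishes only when $R_m$ and $S_i$ do not interact). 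The natural ordering is the unique interleaving in which, for every $m$, all upstream states $S_{\le m}$ precede $R_m$ and all downstream states $S_{>m}$ follow it; since both moves drive an arbitrary ordering toward exactly this pattern, \eqref{eq:natural ordering} simultaneously maximizes every reducible term, hence maximizes the reducible variance and minimizes the residue.

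The hard part will be the two pairwise monotonicity inequalities, since the terms are nested conditional expectations of $G_t = N_t C_{t:h}$ rather than a flat function; the cleanest route is to freeze the common predecessors, invoke $R_m \perp S_i$ there, and apply the law of total variance to a single pair, reading off the interaction variance as the gain. Equally delicate is the precise statement of the reducibility dichotomy, especially the downstream claim that conditioning an action on a future state makes its term irreducible because the required conditional expectation is an inverse problem through the unknown dynamics. The remaining ingredients—invariance of the total variance, symmetry of adjacent paired terms, and termination of the sorting at \eqref{eq:natural ordering}—are routine bookkeeping.
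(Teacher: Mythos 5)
Your proposal is essentially the paper's own argument in complementary form: your reducibility dichotomy is the paper's feasibility condition (an ordering is usable only if each $R_k$ precedes $S_{k+1,h}$, argued via blocked paths in the Bayes network of \cref{fig: bayes net 2}), your ``interaction variance'' inequality for the upstream-state move is exactly the paper's Lemma~\ref{lemma: two var ordering} ($\Var_X\Exp_Y[f]\le\Exp_Y\Var_X[f]$ for independent $X,Y$, via Jensen), and your adjacent-transposition sorting mirrors the paper's two residue-non-increasing operations; since the total variance is ordering-invariant, maximizing the reducible sum and minimizing the residue are the same objective.

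One step is overstated: swapping two adjacent actions preserves the \emph{sum} of their two terms but not the reducible sum when exactly one of them is reducible (e.g.\ an ordering containing $\dots S_i \dots R_m \, R_j \dots$ with $m<i\le j$, where $R_m$ is irreducible and $R_j$ is reducible), so the reducible sum is not determined by pairwise action--state precedences alone. This is repairable by sequencing the moves---first push all downstream states past their action noises so that every action term becomes reducible, after which the remaining state--state and action--action swaps are safe---which is in effect what the paper does by restricting to feasible orderings before sorting.
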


Theorem~\ref{thm: optimal ordering} tells us that
if the optimal CVs are attainable (\ie we can estimate the Q-function exactly), then the natural ordering is optimal. 
However, in practice, the CVs are almost always suboptimal due to error in estimation. 
If the dynamics is relatively accurate and the computing resources for simulation are abundant, then although the residue is higher, the ordering \eqref{eq:action randomness first ordering} could actually be superior. 
Therefore, the ordering of random variables based on the relative accuracy of different estimates is an interesting practical question to pursue in future work.
In the experiment section of this work, we will focus on the natural ordering \eqref{eq:natural ordering}.

%
%
%

\section{Experimental Results and Discussion}
\label{sec:result}

Although the focus of this paper is the theoretical insights, we illustrate our results with experiments in learning neural network policies to solve the CartPole balancing task in OpenAI Gym~\citep{brockman2016openai} with DART physics engine~\citep{Lee2018}. 
In CartPole, 
the reward function is the indicator function that equals to one when the pole is close to being upright and zero otherwise. 
This is a delayed reward problem in that the effective reward signal is revealed only when the task terminates prematurely before reaching the horizon, \ie when the pole deviates from being upright.
The start state is perturbed from being vertical and still by an offset uniformly sampled from $[-0.01, 0.01]^{d_\Sc}$, and the dynamics is deterministic.\footnote{Symbol $d_\Sc$ denotes the dimension of $\Sc$.}
The action space is continuous and Gaussian policies are considered in the experiments. 
The policy is optimized by natural gradient descent \cite{kakade2002natural} with a KL-divergence safe guard on the policy change to be robust to outliers in data collection. Below we report in rewards, negative of costs, which is the natural performance measure provided in OpenAI Gym.

\begin{figure}[t!]
	\centering
	\begin{subfigure}{.24\textwidth}
		\centering		
		\includegraphics[width=1.0\textwidth]{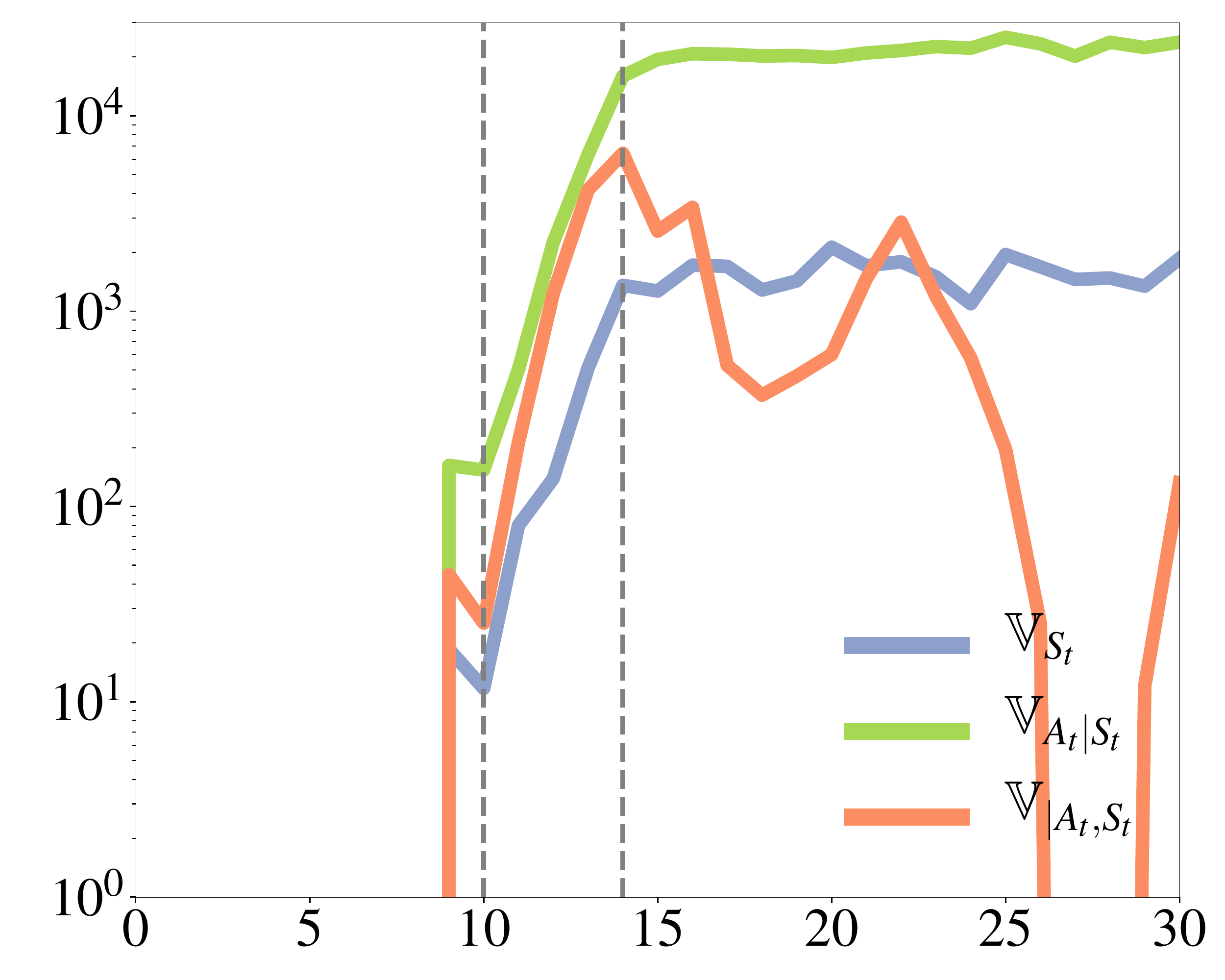}
		\caption{$\sigma = 0.1$}
		\label{fig:std_0-1}
	\end{subfigure}
	\begin{subfigure}{.24\textwidth}
		\centering		
		\includegraphics[width=1.0\textwidth]{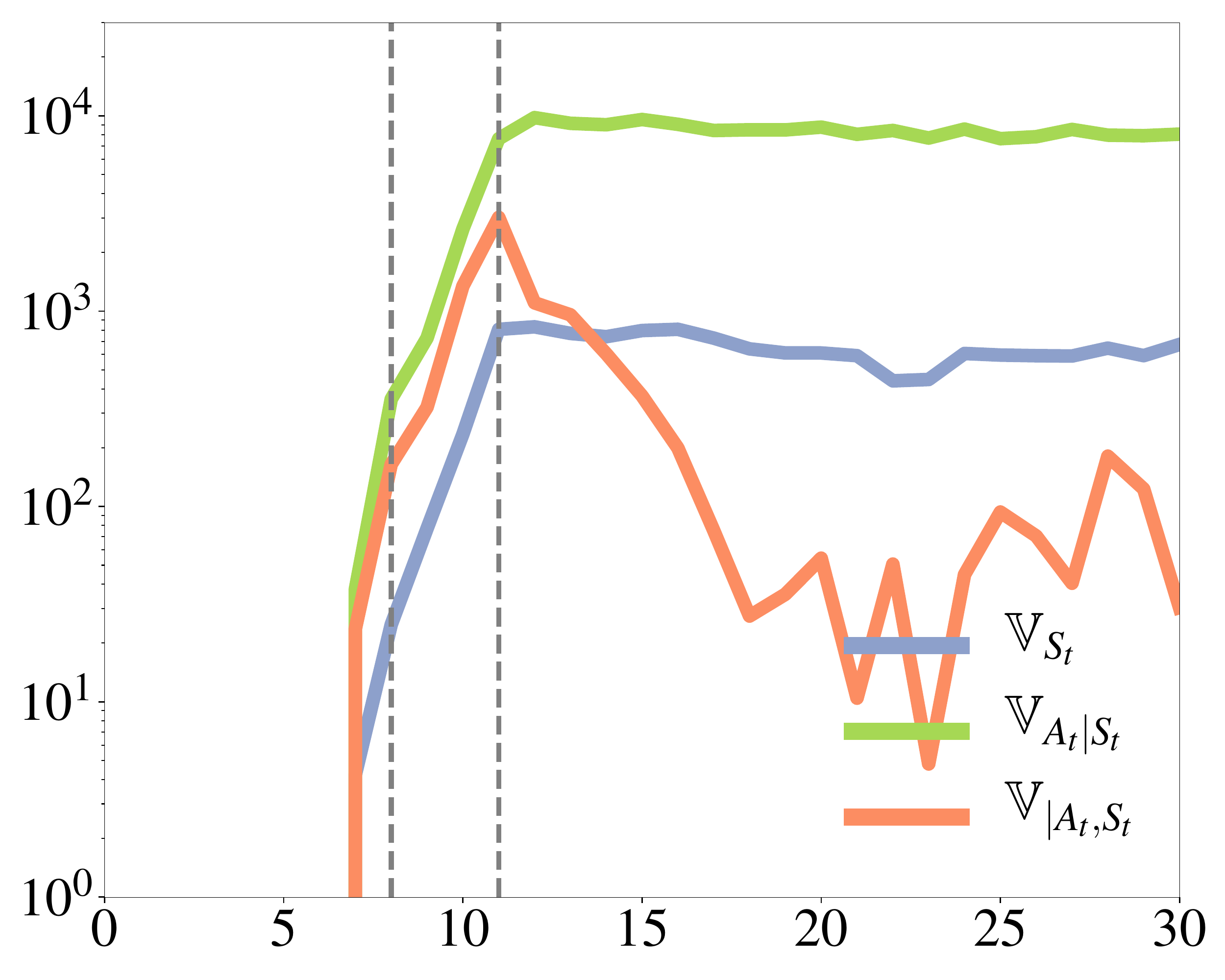}
		\caption{$\sigma = 0.3$}
		\label{fig:std_0-3}
	\end{subfigure}
	\begin{subfigure}{.24\textwidth}
		\centering		
		\includegraphics[width=1.0\textwidth]{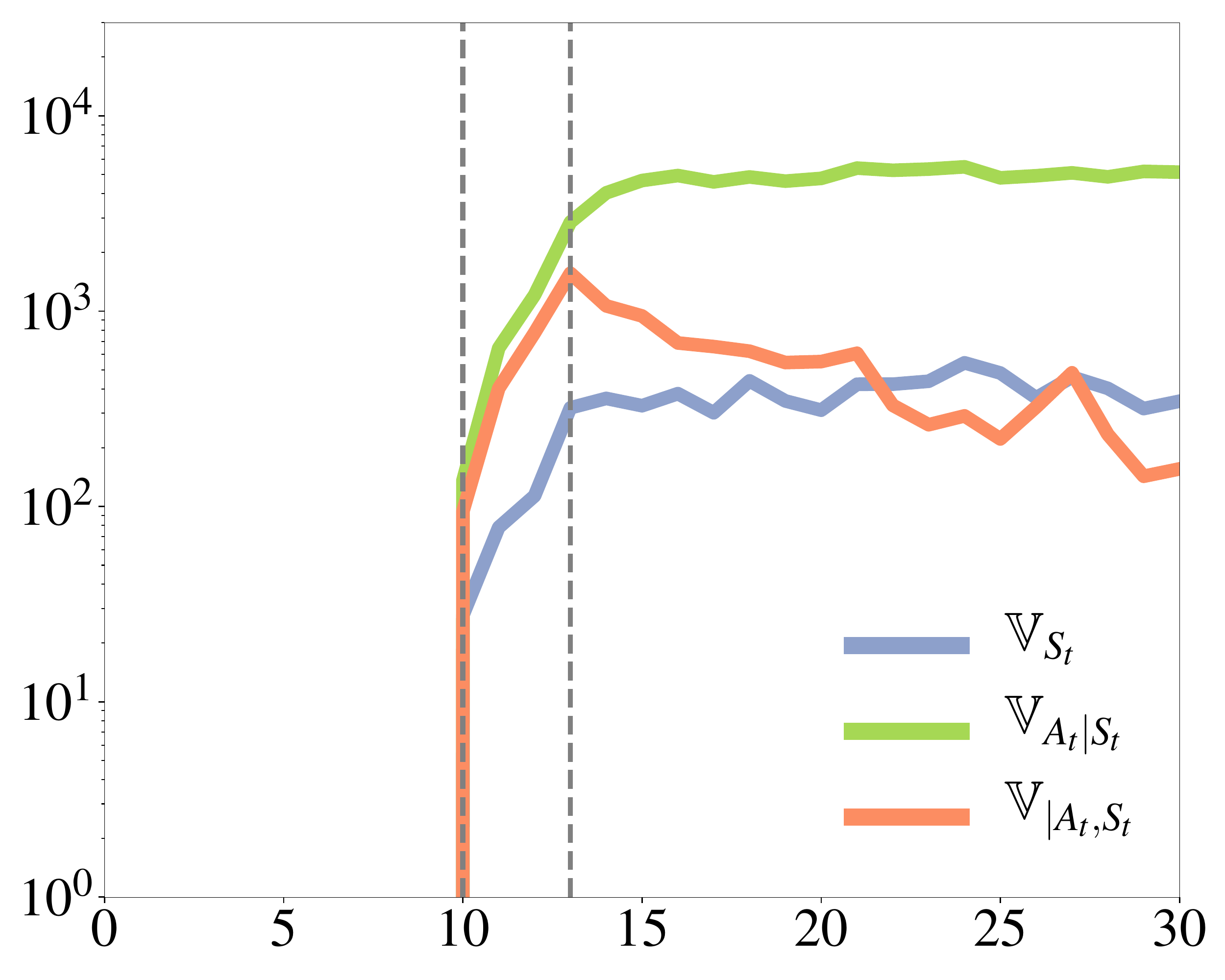}
		\caption{$\sigma = 1.0$}		
		\label{fig:std_1-0}		
	\end{subfigure}
	\begin{subfigure}{.24\textwidth}
		\centering			
		\includegraphics[width=1.0\textwidth]{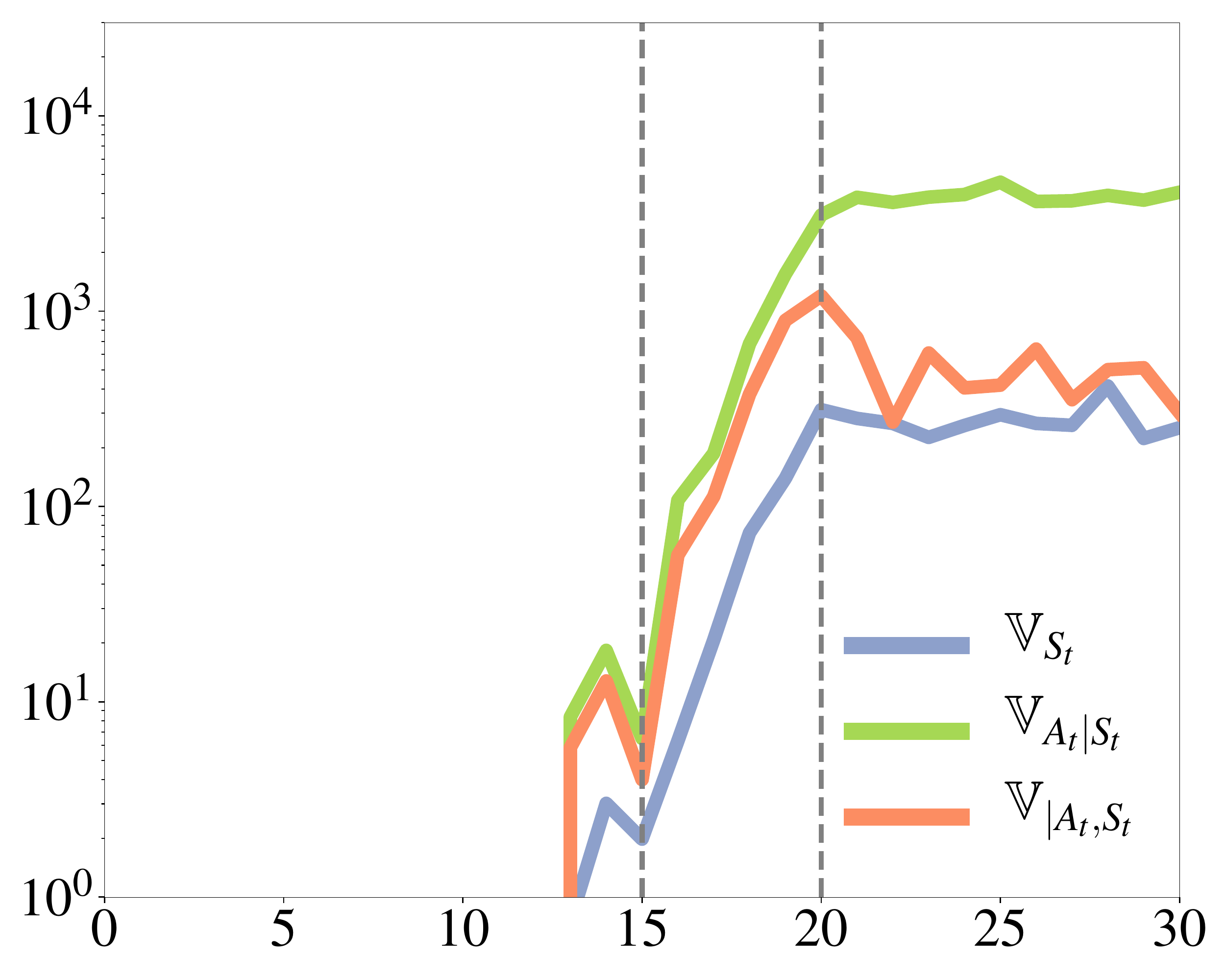}
		\caption{$\sigma = 3.0$}
		\label{fig:std_3-0}		
	\end{subfigure}
	\caption{
		The size of components of $\Tr \paren{\Var[G_t]}$ during training, for $t = 100$, evaluated at policies generated under the ``upper bound'' setting that emulates noiseless estimates for the CartPole problem with horizon $h = 1000$.
		Different initial values for $\sigma$ of the Gaussian policy (defined in \cref{th:varaince size}) are used.
		The $x$ axis denotes the iteration, and the $y$ axis is in $\log$ scale. 
		The two vertical dashed lines mark the boundaries of iterations where the expected accumulated rewards is between $50$ and $900$. 
	}
	\label{fig:sigmas}
\end{figure}

In \cref{fig:sigmas}, we corroborate the theoretical findings in \cref{th:varaince size} by empirically evaluating the components of 
$\Tr\paren{\Var[G_t]}$ during learning, for $t = 100$. 
The learning process on CartPole can be partitioned into \emph{three stages}:
1) initial exploration, when the policy performs very poorly and improves slowly,
2) rapid improvement, when the policy performance increases steeply, 
3) near convergence, when the policy reaches and stays at the peak performance.
The dashed vertical lines in  \cref{fig:sigmas}  delimit these three stages. 
In the rapid improvement stage, due to the variance in the accumulated rewards, \ie length of the trajectories, $\V_{|A_t, S_t}$ is large, close to $\V_{A_t|S_t}$ and about 10 times of $\V_{S_t}$
as predicted by \cref{th:varaince size}.
Furthermore, as $\sigma$ increases, the gap between the peak values of $\V_{S_t}$ and $\V_{A_t|S_t}$ narrows.

\begin{figure}[t]
	\centering
	\begin{subfigure}{.24\textwidth}
		\centering		
		\includegraphics[width=1.0\textwidth]{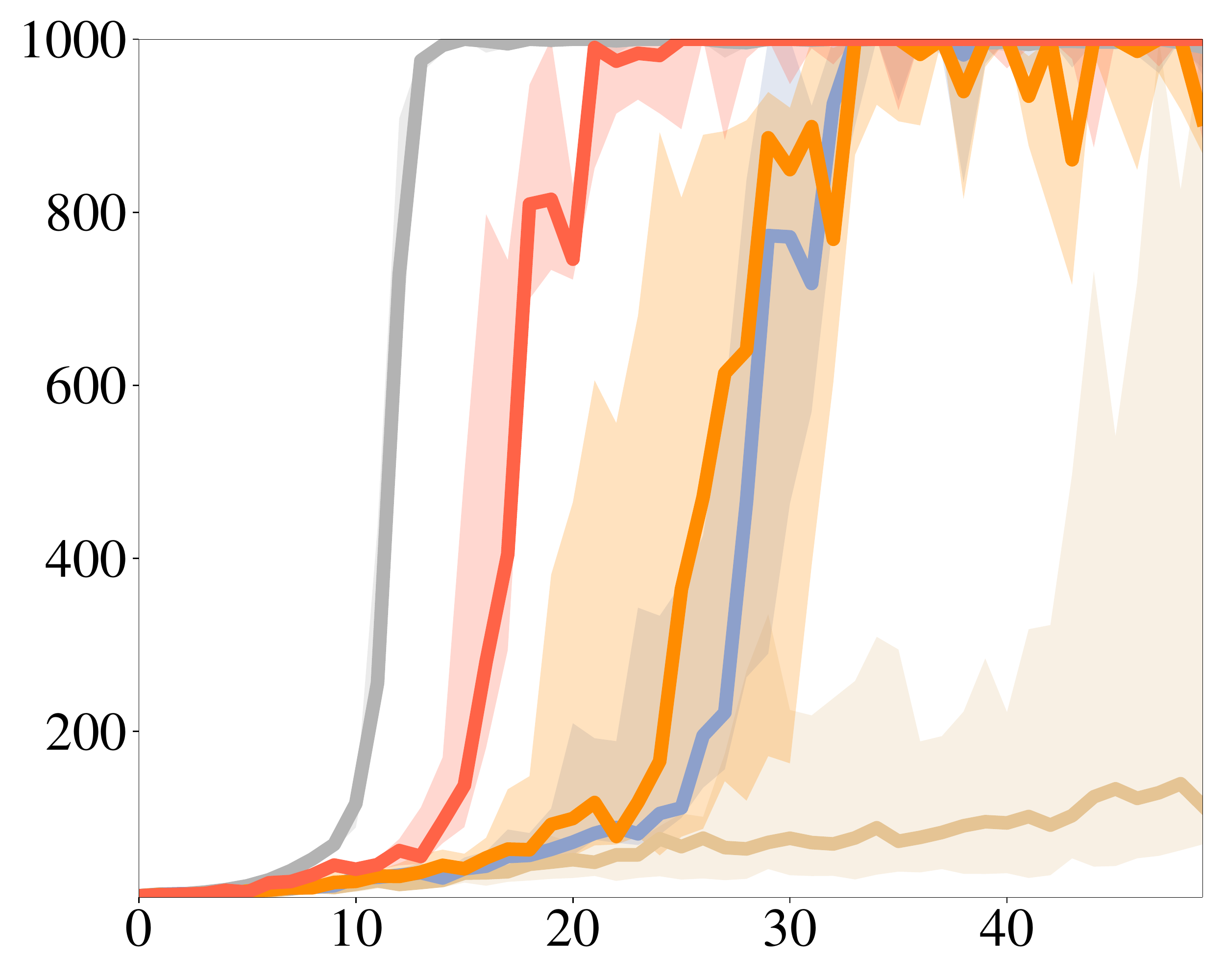}
		\caption{$h = 1000$ (MC)}
		\label{fig:cp1k}
	\end{subfigure}
	\begin{subfigure}{.24\textwidth}
		\centering		
		\includegraphics[width=1.0\textwidth]{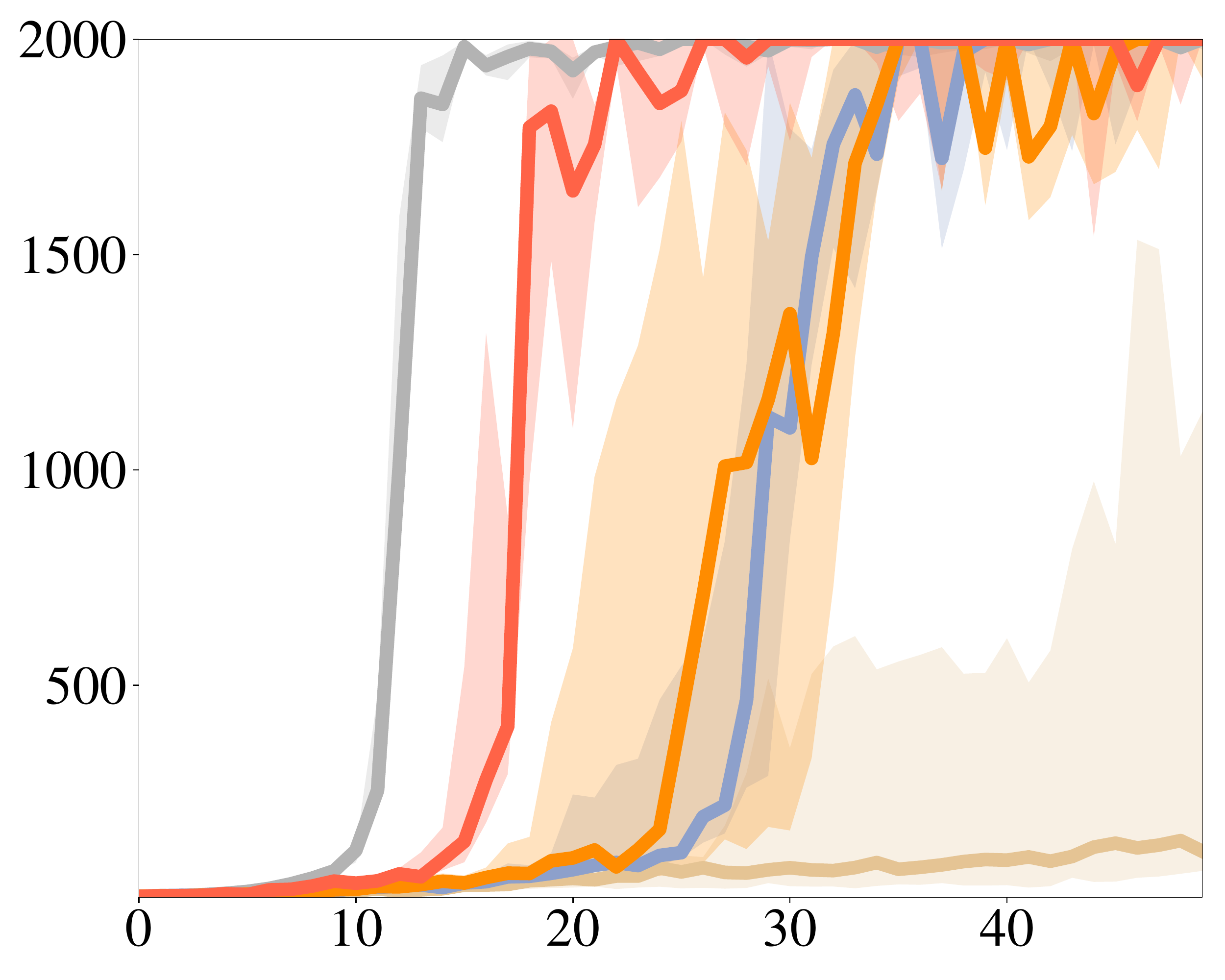}
		\caption{$h = 2000$ (MC)}
		\label{fig:cp2k}
	\end{subfigure}
	\begin{subfigure}{.24\textwidth}
		\centering		
		\includegraphics[ width=1.0\textwidth]{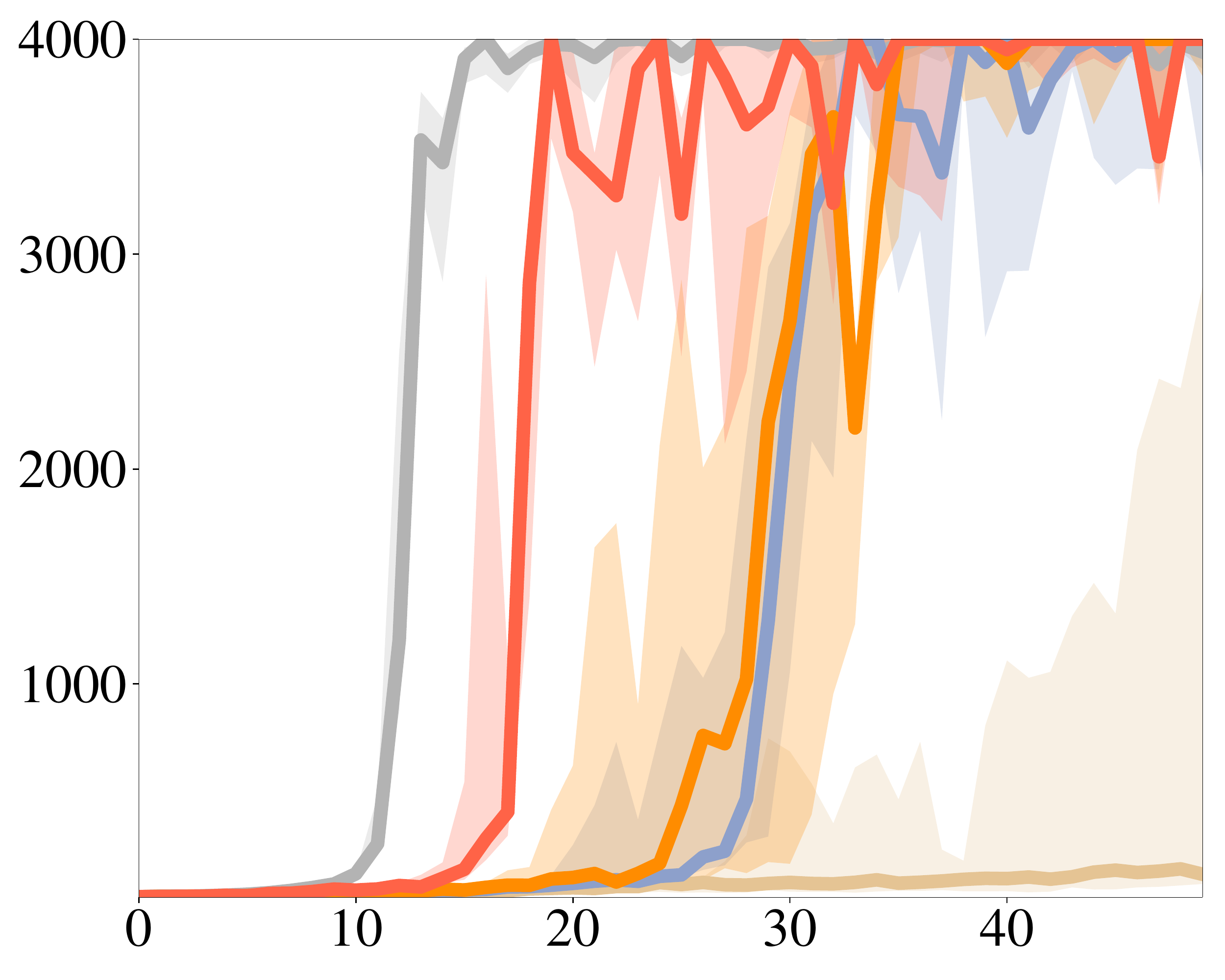}
		\caption{$h = 4000$ (MC)}
		\label{fig:cp4k}
	\end{subfigure}
	\begin{subfigure}{.24\textwidth}
		\centering		
		\includegraphics[ width=0.8\textwidth]{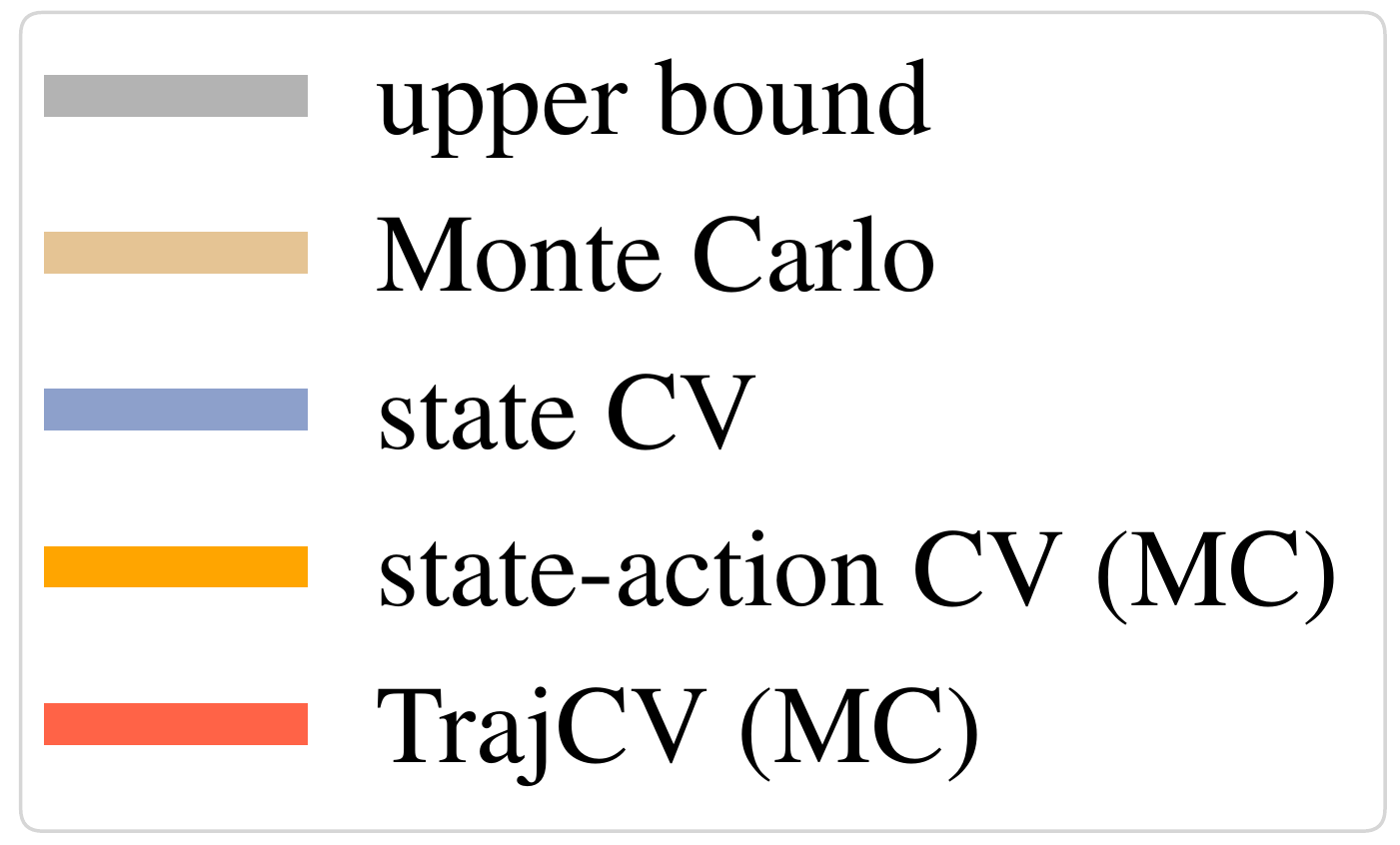}
		\label{fig:cplegend}
	\end{subfigure}
	\caption{
		The  results of naive Monte Carlo estimate of policy gradient, and it augmented with state  CV, state-action CV, and TrajCV on CartPole problems with horizon $h=1000, 2000, 4000$.
		``MC" in the legend indicates that  
		\emph{the expectation over actions for state-action CV and TrajCV is approximated using Monte Carlo (1000 samples)}. 
		Upper bound emulates the results of noiseless estimates. The $x$ and $y$ axes are iteration number and accumulated rewards (the higher the better), respectively. 
		The median of 8 random seeds is plotted, and the shaded area accounts for 25\% percentile. 
	}
	\label{fig:cps}
\end{figure}

\begin{figure}[t]
	\centering
	\begin{subfigure}{.24\textwidth}
		\centering		
		\includegraphics[width=1.0\textwidth]{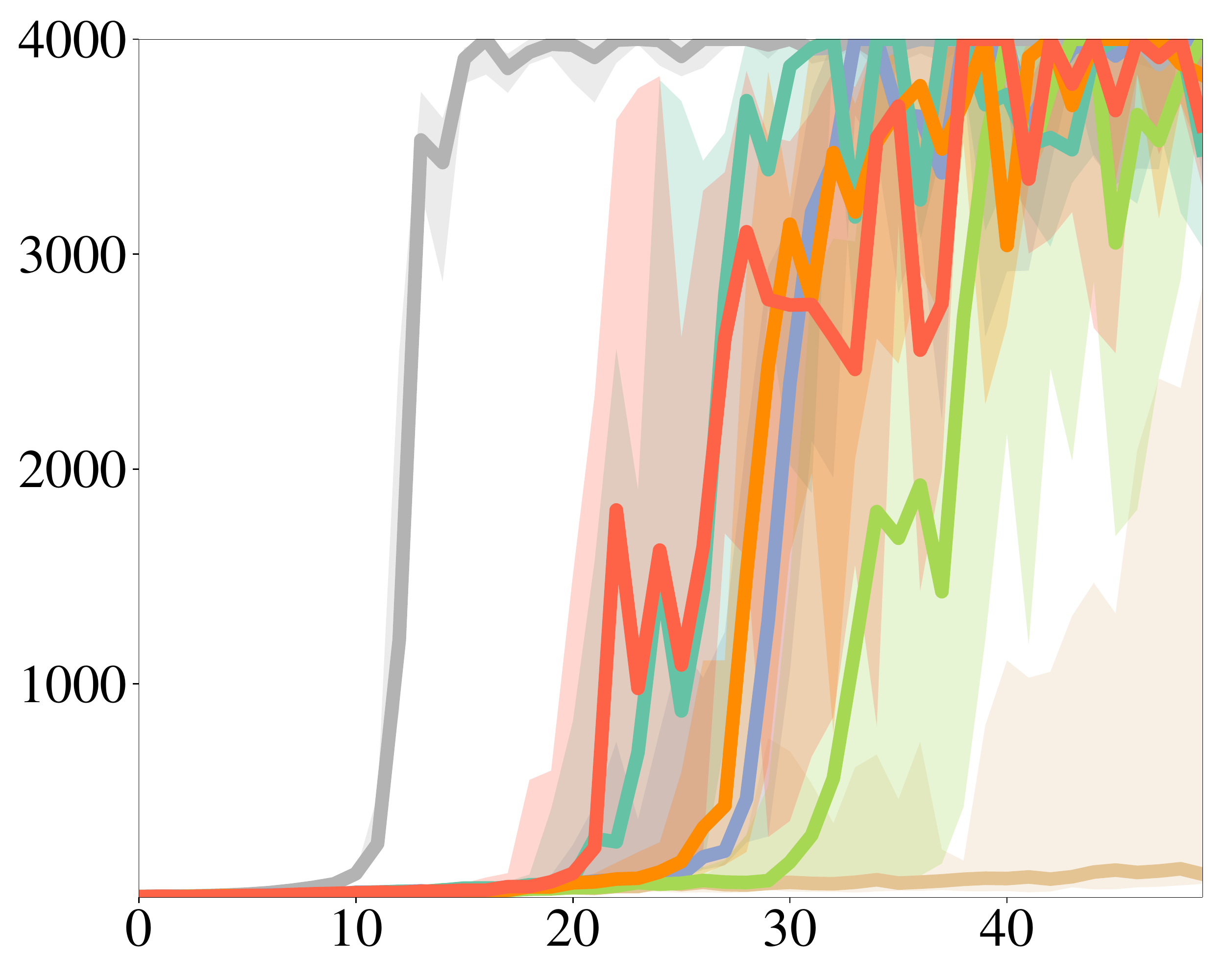}
		\caption{$h = 4000$ (diff)}
		\label{fig:cp4k-diff}
	\end{subfigure}
	\begin{subfigure}{.24\textwidth}
		\centering		
		\vspace{-1em}
		\includegraphics[width=0.88\textwidth]{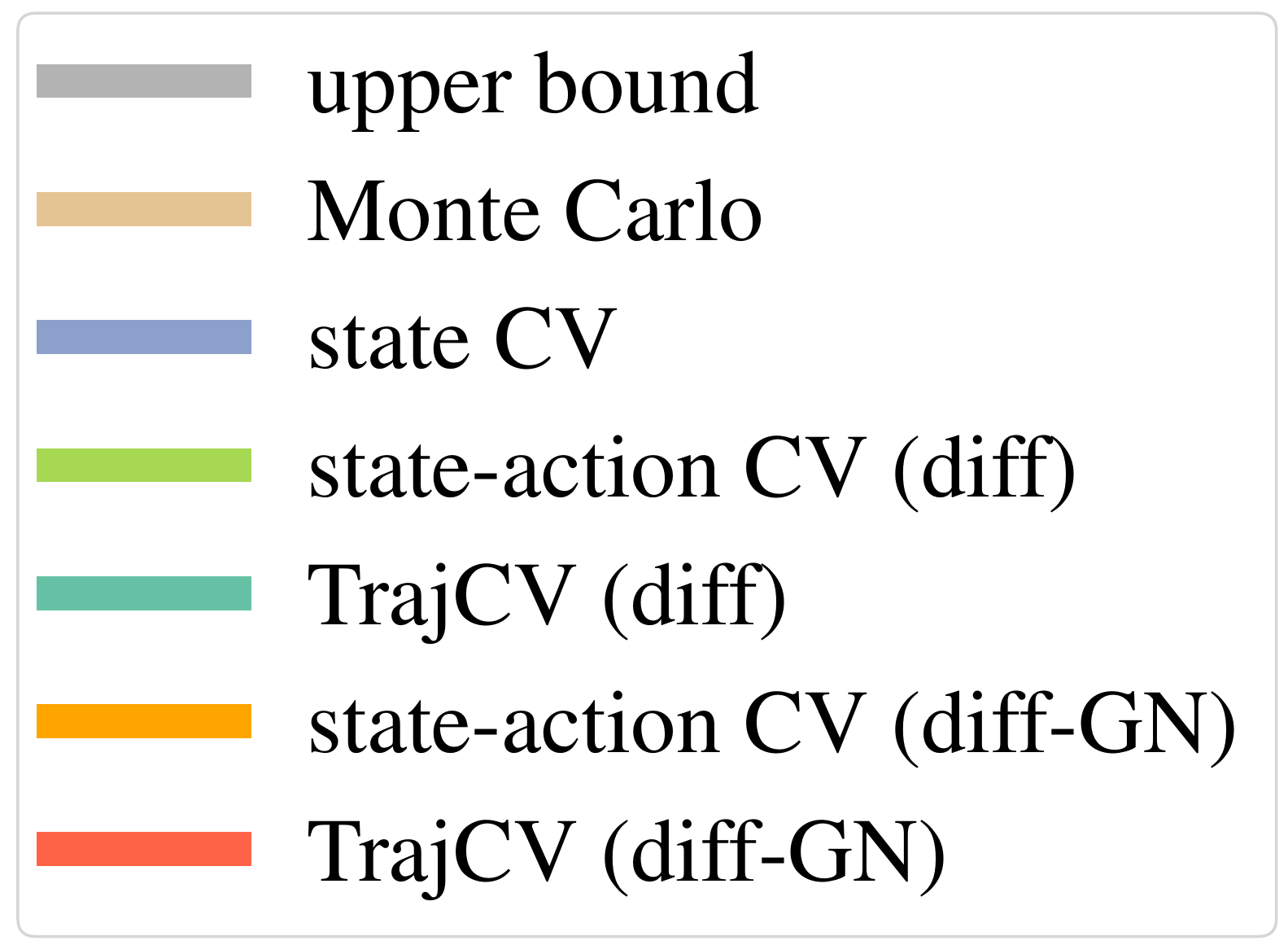}
		\label{fig:cp4k-diff-legend}
	\end{subfigure}
	\begin{subfigure}{.24\textwidth}
		\centering		
		\includegraphics[ width=1.0\textwidth]{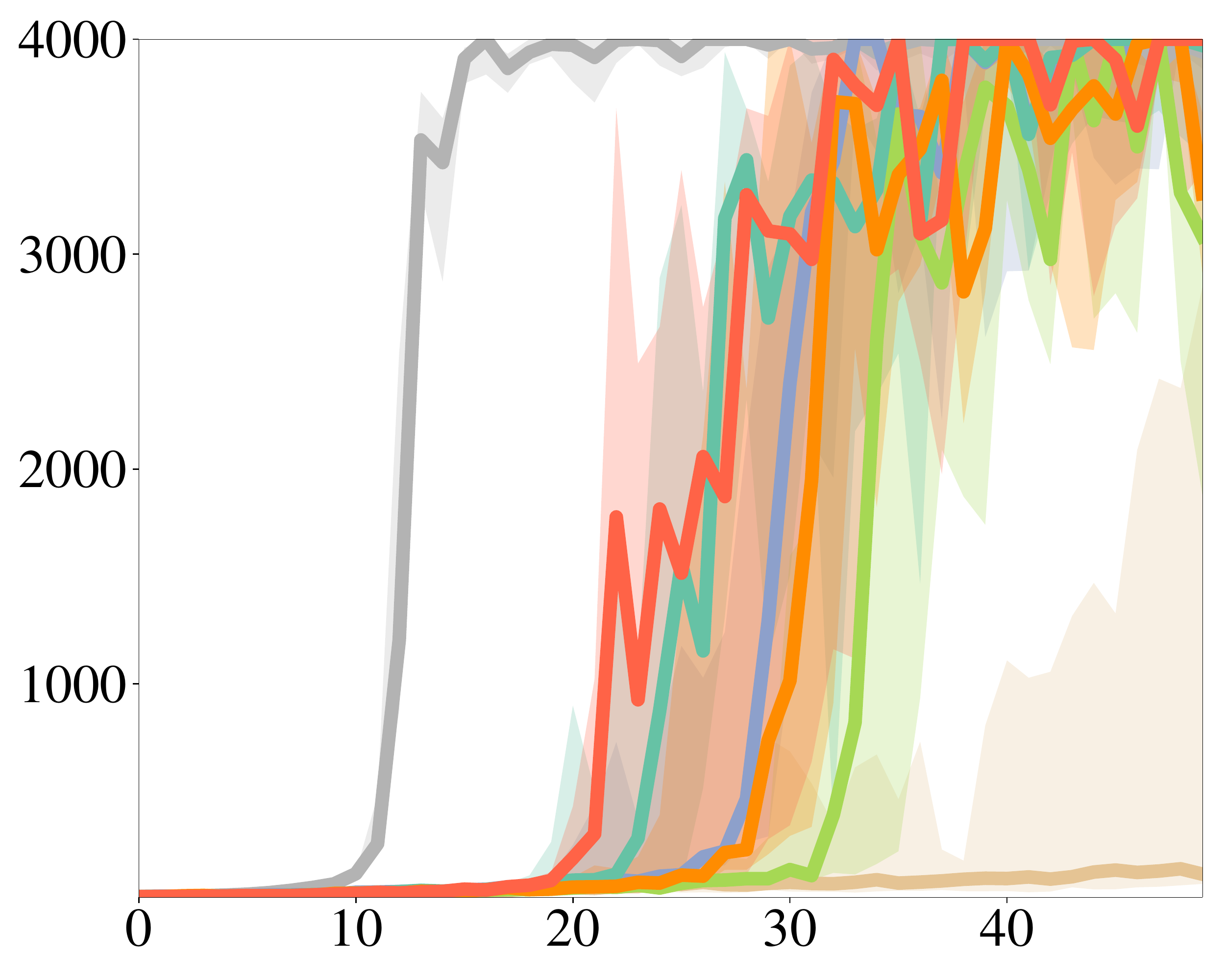}
		\caption{$h = 4000$ (next)}
		\label{fig:cp4k-next}
	\end{subfigure}
	\begin{subfigure}{.24\textwidth}
		\centering		
		\vspace{-1em}		
		\includegraphics[ width=0.88\textwidth]{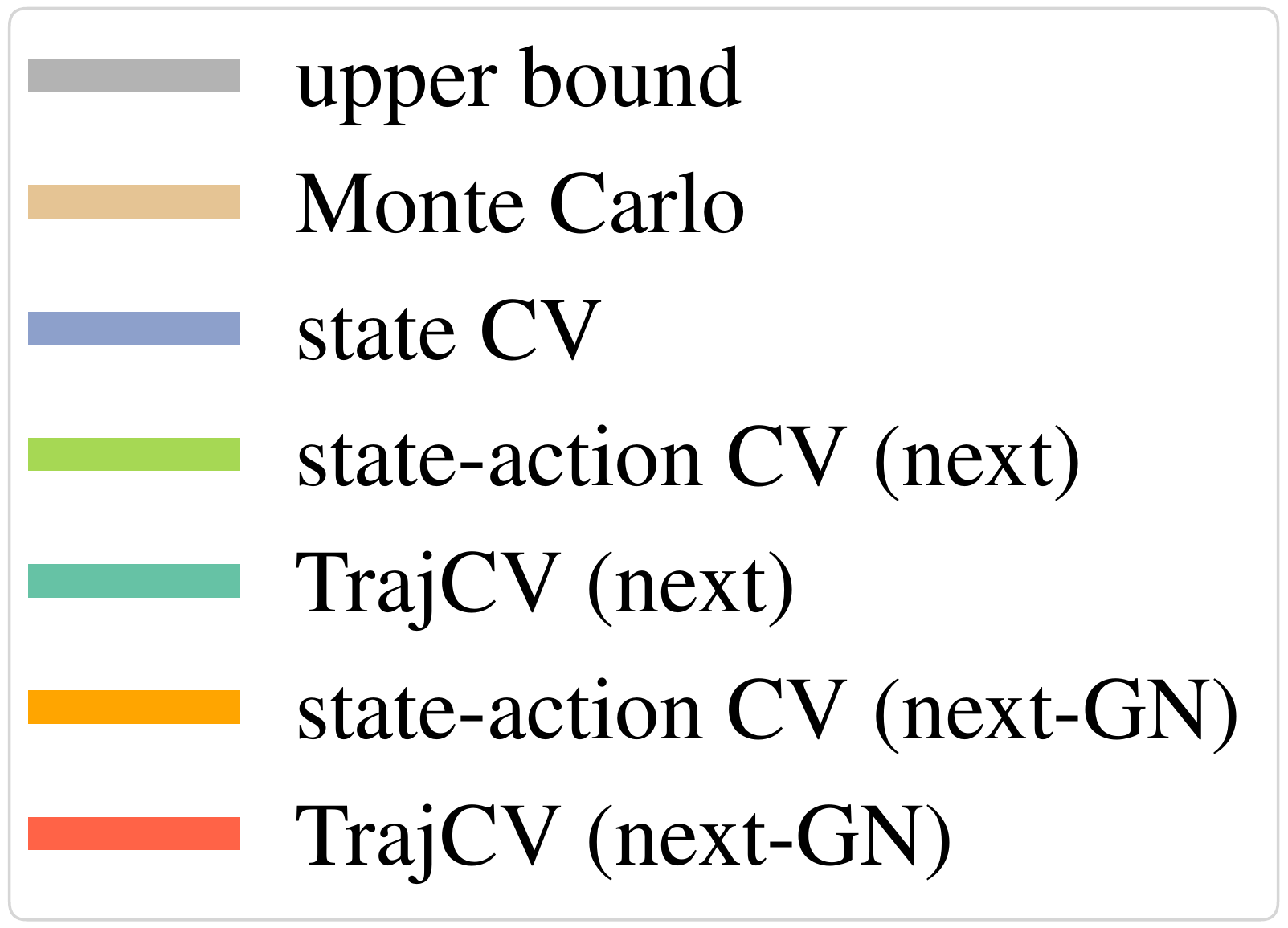}
		\label{fig:cp4k-next-legend}
	\end{subfigure}
	\caption{
		The exact same settings as \cref{fig:cps} except that 
		\emph{the state-action CV and TrajCV are given by $\wh{q}^{(\text{diff})}$ and $\wh{q}^{(\text{diff-GN})}$  (\cref{fig:cp4k-diff}), and  $\wh{q}^{(\text{next})}$ and $\wh{q}^{(\text{next-GN})}$ (\cref{fig:cp4k-next})}.
	}
	\label{fig:cp4k-more}
\end{figure}

We compare naive Monte Carlo estimate \eqref{eq:policy gradient}, state dependent CV \eqref{eq:state cv}, state-action CV \eqref{eq:state-action cv}, and \mbox{TrajCV} \eqref{eq:traj-wise difference estimator}, in \cref{fig:cps} and \cref{fig:cp4k-more}.
To facilitate a fair comparison, we realize all algorithms with the same implementation
based on an on-policy value function approximator $\wh{v}$, so that the effects of value estimates' quality can be normalized. 
More concretely, $\wh{v}$ is learned by sampling abundant data from a biased dynamics model (which is obtained by perturbing the underlying physical parameters).

To construct the Q-function approximator $\wh q$ required by the state-action CV and TrajCV, 
we further train a deterministic function $\wh d$ that maps the current state and action to next state using the same data from the environment that are used for computing the policy gradient estimates. 
Combining the value function approximator $\wh v$ and the dynamics approximator $\wh d$ yields a natural Q-function approximator
$\wh{q}^{\text{(dyn)}}(s, a) = c(s, a) + \wh{v}(\wh{d}(s,a))$. 
Based on this basic $\wh{q}^{\text{(dyn)}}$, we explored several options 
of Q-function approximator for defining the state-action CV and TrajCV:
\begin{enumerate}
	\item Monte Carlo (MC) :  $\wh{q}^{\text{(dyn)}}(s,a)$. We use many samples of actions (1000 in the experiments) to approximate $\Exp_{A_t|S_t}  \br{\wh q^{\text{(dyn)}}(S_t, A_t)}$.
	To reduce variance,
	we use the same action randomness for different steps, \ie using the same 1000  i.i.d. samples from $p_R$ (defined in \cref{sec:optimal ordering}) in the evaluation for  $\Exp_{A_t|S_t}$ with different $t$.
	\item
	We also consider various  Q-function approximators that are quadratic in action, so that $\Exp_{A_t|S_t}$ can be evaluated in closed-form. They are 	derived by different linearizations of the Q-function approximator $\wh q$ as shown below.
	\begin{enumerate}
		\item 
		$\wh{q}^{(\text{next})}(s, a) = c(s, a) + \wh{v}(\wh{s}') + (a-m)^\t \nabla_m \wh{d}(s,m) \nabla \wh{v}(\wh{s}'), $
		\item $\wh{q}^{(\text{next-GN})}(s, a) = \wh{q}^{(\text{next})}(s, a) + \frac{1}{2} (a-m)^\t \nabla_m \wh{d}(s,m) \nabla^2 \wh{v}(\wh{s}') \nabla_m \wh{d}(s,m)^\t (a-m), $
		\item $\wh{q}^{(\text{diff})}(s, a) = \wh{v}(s)  + (a-m)^\t \nabla_m (c(s, m) + \wh{v}(\wh{s}')) + \frac{1}{2}(a-m)^\t \nabla_m^2 c(s,m) (a-m),$
		\item $\wh{q}^{(\text{diff-GN})}(s, a) =\wh{q}^{(\text{diff})}(s, a) + \frac{1}{2} (a-m)^\t \nabla_m \wh{d}(s,m) \nabla^2 \wh{v}(\wh{s}') \nabla_m \wh{d}(s,m)^\t (a-m), $ 
	\end{enumerate}
	where $m = \mu_\theta(s)$ is the mean of the Gaussian policy, $\wh{s}' = \wh{d}(s,m)$,  and ``GN'' stands for Gauss-Newton. 
	We assume $c(s,a)$ is quadratic in $a$ for $\wh{q}^{(\text{next})}$ and $\wh{q}^{(\text{next-GN})}$.
\end{enumerate}
The performance of different CVs using 
MC for approximating $\Exp_{A_t|S_t}$ is reported in \cref{fig:cps}, where 5 rollouts are sampled for each iteration. We also emulate  noise-free gradients, denoted as upper bound, which is constructed by running the state dependent CV but with $100,000$ samples per iteration.\footnote{For the usual learners, the number of samples collected per iteration is often less than $5h$, and much less at the start of learning, because of early termination when the agent fails.}
Overall when more information is used to design the CVs (from state only, state-action, and then trajectory-wise) the convergence speed improves. In particular, as the problem horizon becomes longer, the gap becomes larger: the reward feedback becomes sparser, so the variance due to long-term trajectory starts to dominate, as shown in \cref{fig:cp4k}. 
In \cref{fig:cp4k-more}, 
we observe that when $\Exp_{A_t|S_t}$ is approximated analytically using various techniques ($\wh{q}^{(\text{next})}$ and  $\wh{q}^{(\text{next-GN})}$ are the CVs suggested in~\cite{gu2016q} and \cite{pankov2018reward}, respectively), learning is accelerated when more information is leveraged in CV synthesis.

These preliminary experimental results support the theoretical insights provided in \cref{sec:why we need new cvs} and \cref{sec:trajectory}, suggesting the importance of considering long-term effects in designing CVs, especially for problems with a long horizon. The fix turns out to be quite simple: just padding additional terms (cf. \eqref{eq:traj-wise difference estimator}) onto the existing CVs, which can be done using Q-function approximators available in existing CVs without  new information. Interestingly we prove this simple idea is optimal.
 Important future work includes considering the different bias and variance trade-off discussed in \cref{sec:optimal ordering}, and learning the linear combination weights of the CVs for policy gradient components $\{G_t\}_{t=1}^h$.

%
%
%
%







\acknowledgments{This work was partially supported by NSF CAREER award 1750483.
}


\clearpage
\bibliography{references}  

\clearpage
\appendix
\section*{Appendix}
\section{Missing Proofs}\label{app:proof}
\subsection{Proof for Theorem~\ref{thm: optimal ordering}} 

To understand how the ordering matters, we consider a toy example of estimating $\Exp_{X,Y}[f(X,Y)]$ of some function $f$ of two random variables $X$ and $Y$. We prove a basic lemma.
\begin{lemma} \label{lemma: two var ordering}
	If $X$ and $Y$ are independent, then
	\begin{align}
	\Var_X \Exp_{Y} \br{ f(X,Y)} \le \Exp_Y \Var_{X} \br{f(X,Y)}
	\end{align}
\end{lemma}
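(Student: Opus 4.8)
The plan is to show that the stated inequality is an instance of Jensen's inequality for the \emph{variance functional}, with the gap given precisely by an interaction (ANOVA-type) term. Write $\mu = \Exp_{X,Y}\br{f(X,Y)}$, $a(x) = \Exp_Y\br{f(x,Y)}$, and $b(y) = \Exp_X\br{f(X,y)}$; by independence these marginal expectations coincide with the corresponding conditional ones, so that the two sides of the claim become $\Var_X\br{a(X)}$ and $\Exp_Y\br{\Var_X[f(X,Y)]}$, where every inner $\Var_X$ is taken against the \emph{same} marginal law of $X$. The key observation is that $h \mapsto \Var_X\br{h}$ is a convex functional on $L^2$, since $\Var_X\br{h} = \inf_{c\in\R}\Exp_X\br{(h-c)^2}$ is a partial minimization of a jointly convex map. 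Applying Jensen's inequality to this convex functional, with the randomness supplied by $Y$ and $\Exp_Y\br{f(\cdot,Y)} = a(\cdot)$, yields directly $\Var_X\br{a(X)} \le \Exp_Y\br{\Var_X[f(X,Y)]}$, which is the claim.

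Independence is exactly what makes this argument legitimate: it guarantees that the conditional law of $X$ given $Y=y$ does not depend on $y$, so the inner $\Var_X$ is one fixed convex functional rather than a $y$-dependent one, and Jensen applies with a single $\Phi = \Var_X$. For a more explicit and self-contained route that also identifies the size of the gap, I would instead expand both sides into second moments and use the orthogonal decomposition $f(x,y) = \mu + \tilde a(x) + \tilde b(y) + r(x,y)$, where $\tilde a = a - \mu$, $\tilde b = b - \mu$, and $r$ is the residual satisfying $\Exp_X\br{r(X,y)} = \Exp_Y\br{r(x,Y)} = 0$. Under independence the four summands are pairwise orthogonal in $L^2$, so that $\Exp_{X,Y}\br{f^2} = \mu^2 + \Exp_X\br{\tilde a^2} + \Exp_Y\br{\tilde b^2} + \Exp_{X,Y}\br{r^2}$, while $\Var_X\br{a(X)} = \Exp_X\br{\tilde a^2}$; assembling these shows that the difference between the two sides equals exactly $\Exp_{X,Y}\br{r(X,Y)^2} \ge 0$, with equality iff $f$ is additively separable.

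The main obstacle is that the most tempting shortcut --- applying Jensen term by term after expanding into second moments --- does not work: one of the two resulting bounds points the wrong way (concretely $\Exp_Y\br{b(Y)^2} \ge \mu^2$, not $\le$), so the separate estimates cannot simply be summed. What rescues the argument is treating $\Var_X$ as a single convex functional and pushing the expectation over $Y$ through it (equivalently, carefully bookkeeping the cross terms via orthogonality), and this is where independence is indispensable. I would also flag that the law-of-total-variance identity applied in both conditioning orders only reduces the claim to its $X \leftrightarrow Y$ swap and is therefore circular on its own; the convexity argument, or equivalently the orthogonality computation, is the genuinely needed ingredient.
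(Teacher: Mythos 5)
Your proof is correct, and it really contains two arguments. The first---viewing $h \mapsto \Var_X[h] = \inf_{c}\Exp_X[(h-c)^2]$ as a convex functional on $L^2$ and applying Jensen with the randomness supplied by $Y$---is essentially the paper's own proof in abstract clothing: the paper's one-line argument is exactly the scalar Jensen step $\paren{\Exp_Y[f-\Exp_X[f]]}^2 \le \Exp_Y[(f-\Exp_X[f])^2]$ applied pointwise in $x$ and then integrated over $X$. (Incidentally, the centering constant in the paper's displayed chain should be $\Exp_X[f]$, a function of $Y$, rather than $\Exp_{X,Y}[f]$; as printed the middle expression equals the total variance $\Var[f]$ and the final equality does not hold---your more careful bookkeeping avoids this slip.) The second argument, via the ANOVA decomposition $f = \mu + \tilde a(X) + \tilde b(Y) + r(X,Y)$ with pairwise-orthogonal summands under independence, is a genuinely different route and buys strictly more: it identifies the gap as exactly $\Exp[r(X,Y)^2]$ and characterizes equality as additive separability of $f$, which is sharper than the bare inequality the paper needs (and is a useful quantitative reading of the ``residue'' discussion that follows the lemma). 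Your closing caveats are also well taken: invoking the law of total variance in both conditioning orders only shows the claim is equivalent to its $X \leftrightarrow Y$ swap, the term-by-term second-moment estimate fails because $\Exp_Y[b(Y)^2] \ge \mu^2$ points the wrong way, and independence is indispensable in both of your arguments (it makes $\Var_X$ a single $y$-independent functional and makes the four summands orthogonal); without it the inequality fails in general.
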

\begin{proof}
	This can be proved by Jensen's inequality.
	\begin{align*}
	\Var_X \Exp_{Y} \br{ f}
	= \Exp_X \paren{\Exp_Y \br{f - \Exp_X \br{f}}}^2
	\le \Exp_X \Exp_Y \br{\paren{ f - \Exp_{X,Y} \br{f}}^2}
	= \Exp_Y  \Var_X \br{f(X,Y)}
	\end{align*}	
\end{proof}
Suppose we want to reduce the variance of estimating $\Exp_{X,Y}[f(X,Y)]$ with some CV $g(X,Y)$ but only knowing the distribution $P(X)$, not $P(Y)$.
Lemma \ref{lemma: two var ordering} tells us that in decomposing the total variance of $f(X,Y)$ to design this CV (cf. \cref{sec:simple CV}) we should take the decomposition
\begin{align}
\Var_Y \Exp_{X} \br{f(X,Y)} + \Exp_Y \Var_{X} \br{f(X,Y)}
\end{align}
instead of the decomposition 
\begin{align}
\Var_X \Exp_{Y} \br{ f(X,Y)} + \Exp_X \Var_{Y} \br{ f(X,Y)} 
\end{align}
In other words, we should take the ordering $Y \to X$, instead of $X \to Y$, when we invoke the law of total variance. The reason is that after choosing the optimal CV for each case to reduce the variance due to $X$ (the information that we have access to), we are left with
$
\Var_Y \Exp_{X} \br{f(X,Y)}
$
and 
$
\Exp_X \Var_{Y} \br{ f(X,Y)} 
$, 
respectively, for  $Y \to X$ and $X \to Y$. By Lemma \ref{lemma: two var ordering}, we see the $Y \to X$ has a smaller residue in variance. 
In other words, when we only have partial information about the distribution, we should arrange the random variables whose distribution we know to the latter stage of the ordering, so that the CV we design can leverage the sampled observations to compensate for the lack of prior.


We use this idea to prove the natural ordering \cref{eq:natural ordering} in optimal. In analogy of $X$ and $Y$, we have the action randomness whose distribution is known (\ie the policy) and the dynamics randomness, whose distribution is unknown.

The potential orderings we consider come from first reparameterizing the policy and then ordering the independent random variables $R_t$ (cf. \cref{sec:optimal ordering}). 
We note that the CV is determined by the  ordering, not due to reparameterization. For the natural ordering,
\begin{align} \tag{\ref{eq:natural ordering}}
S_t \rightarrow A_t \rightarrow S_{t+1} \rightarrow A_{t+1} \rightarrow \dots \rightarrow S_{h} \rightarrow A_{h},
\end{align}
it gives the same control variate of the ordering below based on reparameterization
\begin{align} \label{eq:natural ordering in R}
S_t \rightarrow R_t \rightarrow S_{t+1} \rightarrow R_{t+1} \rightarrow \dots \rightarrow S_{h} \rightarrow R_{h}.
\end{align}

Suppose that given an ordering, we can compute its optimal CV. We define the variance left after applying that optimal CV associated with the ordering, the \emph{residue} of that ordering. 
We will show that the residue is minimized at the natural ordering.

The proof consists of two steps.
\begin{enumerate}
\item 
We show that when dynamics is the MDP is unknown, 
an ordering is \emph{feasible}, if and only if, $R_k$ appears before $S_{k+1, h}$ for all $t \le k< h$. That is, a feasible ordering must be causal at least in actions: the action randomness that causes a state must be arranged before that state in the ordering.
We prove this  by contradiction. 
Assume otherwise $S_u$ is the \emph{first} state before $R_k$ satisfying $u > k$.
We see that $R_k$ and $S_{u}$ are dependent, if none of the variables in $S_{k+1,u}$ is given.
This observation can been inferred from 
the Bayes network that connect these random variables (\cref{fig: bayes net 2}), \ie the path from $R_t$ to $S_u$ is not blocked unless any of $S_{k+1,u}$ is observed~\cite{bishop2006pattern}. 
Therefore, if we have an ordering that is violates the causality property defined above, the expectation over $R_k$ required to define the difference estimator becomes intractable to compute, because the dynamics is \emph{unknown}.
This creates a contradiction.

\item 
We show that any feasible ordering can be transformed into the natural ordering in \eqref{eq:natural ordering} using operations that do not increase the residue. 
We consider the following operations
\begin{enumerate}
\item Suppose, in an ordering, there is $S_v \rightarrow S_u, v > u$, then we can exchange them without affecting residue.  

\item Suppose, in a feasible ordering, there is $S_v \rightarrow R_k \rightarrow S_u$ with $v > u$  and $k\neq u,v$. 
Because this is a feasible ordering, we have $k+1\leq u < v$. 
This means that we can also move $R_k$ after $S_u$. This change would not increase residue, because of the discussion after Lemma~\ref{lemma: two var ordering}.  Then we change exchange the order of $S_v$ and $S_u$ too using the first operation.
\end{enumerate}
By using these two operations repeatedly, we can make all the states ordered by their subscripts, without increasing the residue. Finally, we can move $R_k$ to just right after $S_k$ without increasing residue using Lemma~\ref{lemma: two var ordering} again. 
Thus, we arrive at the natural ordering in \eqref{eq:natural ordering in R}, which is the same as \eqref{eq:natural ordering}. This concludes the proof.

\end{enumerate}
\subsection{Proof of \cref{th:varaince size}}

Suppose the dimension of $\AA$ is $d_{\AA}$ which is finite.
To bound these terms, we derive some intermediate bounds.
First, by the Gaussian assumption, 
\begin{align*}
\pi_{S_t}(A_t) = (2\pi\sigma)^{-\frac{d_{\AA}}{2}} \exp\left( \frac{-1
}{2\sigma} \norm{A_t - \mu_\theta(S_t)}^2 \right)
\end{align*}
we see that 
\begin{align*}
N_t := \nabla \ln \pi(A_t|S_t) 
= 
	\begin{bmatrix}
	\nabla_\theta  \ln \pi(A_t|S_t) \\
	\nabla_\sigma  \ln \pi(A_t|S_t)
	\end{bmatrix}
= \begin{bmatrix}
\frac{-1}{\sigma} \nabla \mu_\theta(S_t)(A_t - \mu_\theta(S_t)) \\
\frac{1
}{2\sigma^2} \norm{A_t - \mu_\theta(S_t)}^2 -\frac{d_{\AA}}{2\sigma}
\end{bmatrix}
\end{align*}
Therefore, for $\sigma$ small enough, $\norm{N_t} = O(\frac{\poly(A_t)}{\sigma^2})$. 

Second, by the assumption on boundedness of $C$, we have $C_{t:h} = O(h)$ and $Q_t \coloneqq q^{\pi}(S_t,A_t) =  O(h)$.
We use these equalities to bound $\Exp_{|S_t} \br{ N_t C_{t:h}}$. 
We observe that the identity that \begin{align*}
\Exp_{|S_t} \br{ N_t C_{t:h}} =  \nabla \Exp_{A_t|S_t}[q^{\pi}(S_t, A_t)] 
\end{align*}
Under the assumption that $q^{\pi}$ is analytic, $q^{\pi}$ can be written in terms of an infinite sum of polynomials, \ie $q^\pi(S_t, A_t) = \poly_{S_t}(A_t)$, where the subscript denotes the coefficients in the polynomial depends on $S_t$.

Now we are ready to bound $\V_{S_t}$, 
$\V_{A_t|S_t}$, and $
\V_{|S_t, A_t}$.  We recall that the expectation of polynomials over a Gaussian distribution depends only polynomially on the Gaussian's variance, with an order no less than $1$.  
Therefore, for $\sigma$ small enough, we have
$\norm{ \nabla \Exp_{A_t|S_t}[q(S_t, A_t)] } = O(h)$ independent of $\sigma$, which implies that 
\begin{align*}
\V_{S_t} &= \Tr\paren{\Var_{S_t} \br{ \Exp_{|S_t} \br{ N_t C_{t:h}}}} = o(h^2)
\end{align*}
We can apply the same observation on the Gaussian expectation of polynomials and derive, for $\sigma$ small enough,
\begin{align*}
\V_{A_t|S_t}
&= \Tr\paren{	\Exp_{S_t} \br{ \Var_{A_t|S_t} \br{  N_t \paren{\Exp_{|S_t, A_t} \br { C_{t:h}} }}}} \\
&= \Tr\paren{	\Exp_{S_t} \br{ \Var_{A_t|S_t} \br{  N_t q^{\pi}(S_t, A_t)}}} = O\left(\frac{h^2}{\sigma^4} \right)
\end{align*}
Similarly we can show
\begin{align*}
\V_{|S_t, A_t} &= \Tr\paren{\Exp_{S_t, A_t} [\Var_{|S_t, A_t} \br {N_t C_{t:h} }]} = O\left(\frac{h^2}{\sigma^4} \right)
\end{align*}
This concludes the proof.

\subsection{Bound for Variance of Policy Gradient}
\label{app:pg var bound}
The variance of the policy gradient $\Var[G]$ can be bounded by the variance of policy gradient components $\{\Var[G_t]\}_{t=1}^n$.
Appealing to the formula for the variance of the sum of two random variables
\begin{align*}
\Var[X+Y] = \Var[X] + \Var[Y] + 2 \Cov[X, Y],
\end{align*}
linearity of covariance
\begin{align*}
\Cov[X, Y + Z] = \Cov[X, Y] + \Cov[X, Z]
\end{align*}
 and Cauchy -Schwartz inequality
 \begin{align*}
\Cov[X,Y] \le \Var[X] + \Var[Y],
 \end{align*}
we can derive the following:
\begin{align*}
\eqsp \Var[G] 
&= \Var \br{ G_{1:h}}
\\&= \Var \br{G_1} + \Var \br{G_{2:h}} + \Cov[G_1, G_{2:h}]
\\&=\Var [G_1] + \sum_{t=2}^h \Cov[G_1, G_t] + \Var \br{G_{2:h}}
\\&= \sum_{t=1}^h \Var[G_t] + \sum_{u = 1}^h \sum_{v = u+1}^h \Cov[G_u, G_v]
\\& \le \sum_{t=1}^h \Var[G_t]  + \sum_{u = 1}^h \sum_{v = u+1}^h \paren{\Var[G_u] + \Var[G_v]}
\\& = h \sum_{t=1}^h \Var[G_t]
\end{align*}

\end{document}